\newtheorem{theorem}{Theorem}
\renewcommand{\raggedright}{\leftskip=0pt \rightskip=0pt plus 0cm}
\begin{document}
%
% paper title
% Titles are generally capitalized except for words such as a, an, and, as,
% at, but, by, for, in, nor, of, on, or, the, to and up, which are usually
% not capitalized unless they are the first or last word of the title.
% Linebreaks \\ can be used within to get better formatting as desired.
% Do not put math or special symbols in the title.
\title{Self-Supervised Discriminative Feature Learning for Deep Multi-View Clustering}
%
%
% author names and IEEE memberships
% note positions of commas and nonbreaking spaces ( ~ ) LaTeX will not break
% a structure at a ~ so this keeps an author's name from being broken across
% two lines.
% use \thanks{} to gain access to the first footnote area
% a separate \thanks must be used for each paragraph as LaTeX2e's \thanks
% was not built to handle multiple paragraphs
%
%
%\IEEEcompsocitemizethanks is a special \thanks that produces the bulleted
% lists the Computer Society journals use for "first footnote" author
% affiliations. Use \IEEEcompsocthanksitem which works much like \item
% for each affiliation group. When not in compsoc mode,
% \IEEEcompsocitemizethanks becomes like \thanks and
% \IEEEcompsocthanksitem becomes a line break with idention. This
% facilitates dual compilation, although admittedly the differences in the
% desired content of \author between the different types of papers makes a
% one-size-fits-all approach a daunting prospect. For instance, compsoc 
% journal papers have the author affiliations above the "Manuscript
% received ..."  text while in non-compsoc journals this is reversed. Sigh.

\author{Jie Xu, Yazhou Ren,~\IEEEmembership{Member,~IEEE}, Huayi Tang, Zhimeng Yang, Lili Pan, Yang Yang,~\IEEEmembership{Senior Member,~IEEE}, Xiaorong Pu, Philip S. Yu,~\IEEEmembership{Fellow,~IEEE}, Lifang He,~\IEEEmembership{Member,~IEEE}% <-this % stops a space
\IEEEcompsocitemizethanks{
\IEEEcompsocthanksitem This work was supported in part by National Natural Science Foundation of China (No. 61806043), Sichuan Science and Technology Program (Nos. 2021YFS0172, 2020YFS0119, and 2020YFG0288), and Special Science Foundation of Quzhou (No. 2020D013). Philip S. Yu was supported in part by NSF under grants III-1763325, III-1909323,  III-2106758, and SaTC-1930941. Lifang He was supported by Lehigh's accelerator grant S00010293.
\IEEEcompsocthanksitem Jie Xu (jiexuwork@outlook.com), Yazhou Ren (yazhou.ren@uestc.edu.cn), Huayi Tang, Zhimeng Yang, Yang Yang, and Xiaorong Pu are with the School of Computer Science and Engineering, University of Electronic Science and Technology of China, Chengdu 611731, China.
\IEEEcompsocthanksitem Lili Pan is with the School of Information and Communication Engineering, University of Electronic Science and Technology of China, Chengdu 611731, China.
\IEEEcompsocthanksitem Philip S. Yu is with the Department of Computer Science, University of Illinois at Chicago, Chicago, IL 60607, USA.
\IEEEcompsocthanksitem Lifang He is with the Department of Computer Science and Engineering, Lehigh University, Bethlehem, PA 18015, USA.
}% <-this % stops an unwanted space
\thanks{
% Manuscript received 7 Dec. 2021; revised 12 Jun. 2022; accepted 15 Jul. 2022. ~
~ \\
(Corresponding author: Yazhou Ren.)}
}

\IEEEtitleabstractindextext{
\begin{abstract}
Multi-view clustering is an important research topic due to its capability to utilize complementary information from multiple views. However, there are few methods to consider the negative impact caused by certain views with unclear clustering structures, resulting in poor multi-view clustering performance. To address this drawback, we propose \underline{s}elf-supervised discriminative feature learning for \underline{d}eep \underline{m}ulti-\underline{v}iew \underline{c}lustering (SDMVC). Concretely, deep autoencoders are applied to learn embedded features for each view independently. To leverage the multi-view complementary information, we concatenate all views' embedded features to form the global features, which can overcome the negative impact of some views' unclear clustering structures. In a self-supervised manner, pseudo-labels are obtained to build a unified target distribution to perform multi-view discriminative feature learning. During this process, global discriminative information can be mined to supervise all views to learn more discriminative features, which in turn are used to update the target distribution. Besides, this unified target distribution can make SDMVC learn consistent cluster assignments, which accomplishes the clustering consistency of multiple views while preserving their features' diversity. Experiments on various types of multi-view datasets show that SDMVC outperforms 14 competitors including classic and state-of-the-art methods. The code is available at \url{https://github.com/SubmissionsIn/SDMVC}.
\end{abstract}

% Note that keywords are not normally used for peerreview papers.
\begin{IEEEkeywords}
Multi-view clustering, Deep clustering, Unsupervised learning, Self-supervised learning.
\end{IEEEkeywords}}

% make the title area
\maketitle

% To allow for easy dual compilation without having to reenter the
% abstract/keywords data, the \IEEEtitleabstractindextext text will
% not be used in maketitle, but will appear (i.e., to be "transported")
% here as \IEEEdisplaynontitleabstractindextext when the compsoc 
% or transmag modes are not selected <OR> if conference mode is selected 
% - because all conference papers position the abstract like regular
% papers do.
\IEEEdisplaynontitleabstractindextext
% \IEEEdisplaynontitleabstractindextext has no effect when using
% compsoc or transmag under a non-conference mode.

% For peer review papers, you can put extra information on the cover
% page as needed:
% \ifCLASSOPTIONpeerreview
% \begin{center} \bfseries EDICS Category: 3-BBND \end{center}
% \fi
%
% For peerreview papers, this IEEEtran command inserts a page break and
% creates the second title. It will be ignored for other modes.
\IEEEpeerreviewmaketitle

\IEEEraisesectionheading{\section{Introduction}\label{sec:introduction}}
% Computer Society journal (but not conference!) papers do something unusual
% with the very first section heading (almost always called "Introduction").
% They place it ABOVE the main text! IEEEtran.cls does not automatically do
% this for you, but you can achieve this effect with the provided
% \IEEEraisesectionheading{} command. Note the need to keep any \label that
% is to refer to the section immediately after \section in the above as
% \IEEEraisesectionheading puts \section within a raised box.

% The very first letter is a 2 line initial drop letter followed
% by the rest of the first word in caps (small caps for compsoc).
% 
% form to use if the first word consists of a single letter:
% \IEEEPARstart{A}{demo} file is ....
% 
% form to use if you need the single drop letter followed by
% normal text (unknown if ever used by the IEEE):
% \IEEEPARstart{A}{}demo file is ....
% 
% Some journals put the first two words in caps:
% \IEEEPARstart{T}{his demo} file is ....
% 
% Here we have the typical use of a "T" for an initial drop letter
% and "HIS" in caps to complete the first word.
% \IEEEPARstart{C}{lustering} analysis is a fundamental task in unsupervised learning and has been applied
% in a wide range of fields, such as machine learning, computer vision, data mining, and pattern recognition, etc.
\IEEEPARstart{C}{lustering} analysis is a fundamental task in unsupervised learning and has been applied in a wide range of data mining fields, such as knowledge engineering \cite{mineau1995automatic}, information retrieval \cite{li2020weakly}, and pattern recognition \cite{tang2015rgb}, etc.
However, traditional clustering methods are generally inapplicable in the scenarios where the real-world data are collected from multiple views or multiple modalities as known as multi-view data or multimodal data, e.g., (1) multiple mappings of one object, (2) visual feature + textual feature, and (3) scale-invariant feature transform (SIFT) + local binary pattern (LBP).
Therefore, multi-view clustering (MVC) becomes a hot research topic that can leverage complementary information and comprehensive characteristics hidden in multi-view data.

\begin{figure}[!t]
\centering
\subfigure[]{\includegraphics[height=1.362in, width=1.7in]{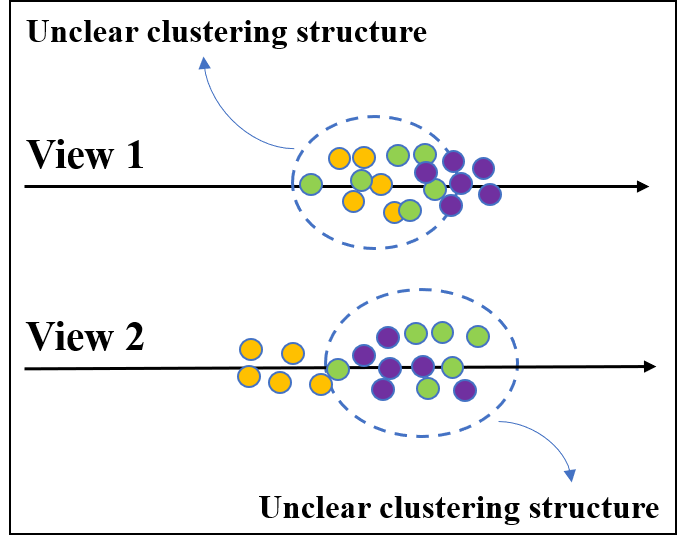}}   % 1.248
\subfigure[]{\includegraphics[height=1.362in, width=1.7in]{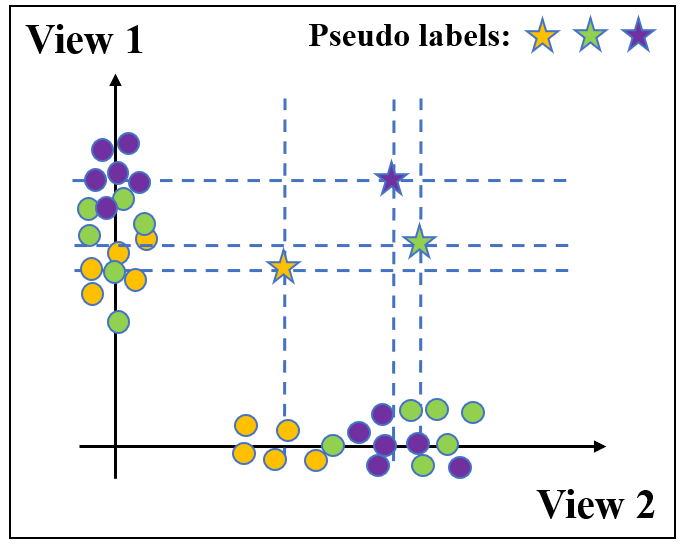}}
% \vfill
\subfigure[]{\includegraphics[height=1.362in, width=1.7in]{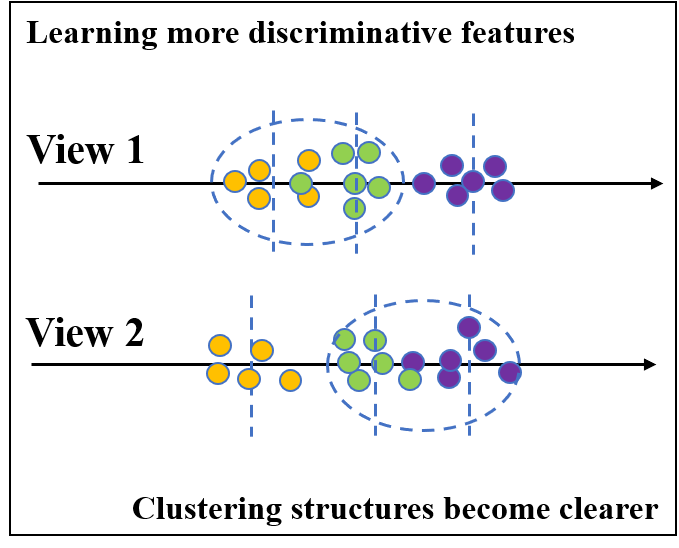}}
\subfigure[]{\includegraphics[height=1.362in, width=1.7in]{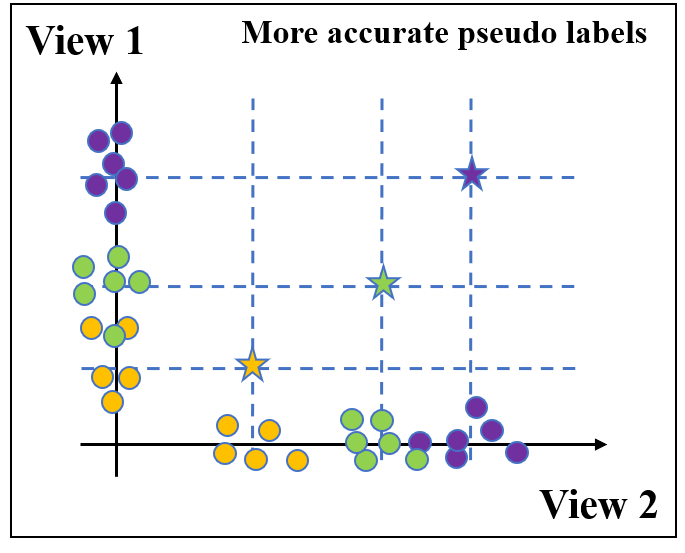}}
% \vspace{-0.01cm}
\caption{An illustrative example of our motivation.}
\label{fig:motivation}
% \vspace{-0.04cm}
\end{figure}

MVC can be roughly classified into four categories: 
(1) Numerous MVC methods are based on subspace clustering \cite{zhang2018generalized,Ming2018Multiview,2020Robust,2021Rank}, where the shared representation of multiple views and similarity metric matrix are explored. 
(2) Some MVC methods \cite{liu2013multi,wang2018multiview,yang2020uniform} apply non-negative matrix factorization to decompose each view into a low-rank matrix for clustering. 
(3) In the third MVC category \cite{nie2017self,Zhan8052206,wang2019gmc,xiao2019knowledge,peng2019comic,wang2019parameter}, graph-based structure information is integrated to mine clusters among multiple views.
(4) Deep learning techniques are also applied to MVC in recent years, such as \cite{li2019deep,zhu2019multi,xie2019multiview,xu2021deep,wang2020deep,LiuAAAI2022}. Deep MVC aims to obtain better performance with the feature representation capability of deep models.
More details can be found in \cite{yang2018multi,wang2021Survey} which provide a comprehensive survey about multi-view clustering.

\begin{figure*}[!th]
\centering
\subfigure{\includegraphics[width=7.1in]{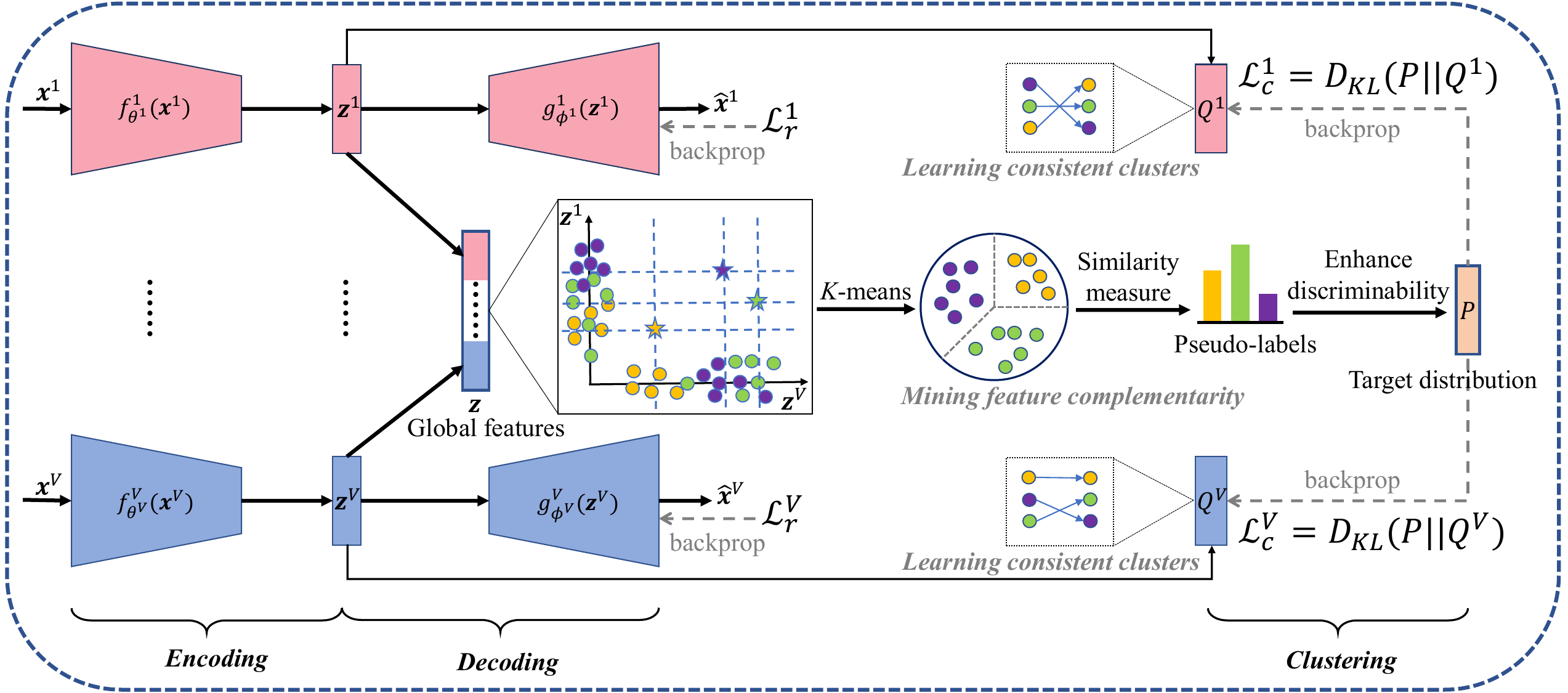}}
% \vspace{-0.02cm}
\caption{The framework of SDMVC. Each view contains an autoencoder and a clustering layer. For the $v$-th view, $f^v_{\theta^v}$ denotes the encoder and $g^v_{\phi^v}$ denotes the decoder.
$\pmb{z}^v$ is the embedded features learned by autoencoder and $Q^v$ is the cluster assignment distribution output by the clustering layer. The unified target distribution $P$ is iteratively updated to conduct multi-view discriminative feature learning. In this way, the feature complementarity and the consistent multi-view clustering are achieved.}
\label{fig:framework}
\end{figure*}

Although existing MVC methods have achieved significant progress in the past decade, their performance is still limited in the following issues.
Firstly, some MVC methods have to depend on many hyper-parameters, which might be sensitive to different datasets.
Since the real-world clustering tasks usually have no label information to help select the setting of hyper-parameters, the methods with less or insensitive hyper-parameters are needed.
In addition, previous works usually conduct the consistency of multiple views directly in the embedded feature space, which reduces the capability of features to preserve the diversity information among multiple views. The consequence is, if certain views' clustering structures are highly fuzzy, the effect of the views with clear clustering structures will be limited, which results in poor clustering performance in all views.

To overcome the aforementioned issues, we propose self-supervised discriminative feature learning for deep multi-view clustering (SDMVC).
Our motivation comes from the observation illustrated in Fig.~\ref{fig:motivation}: (a) The discriminability of multiple views is different due to their diversity (for example, red motorcycles and red bicycles are almost indistinguishable from a color view, but they are clearly distinguishable from a semantic view). So, the discriminative degree of different views' features are different and higher discriminative features are more conducive to clustering (i.e., higher discriminative features have clearer clustering structures).
(b) When the features are concatenated, the clustering structures with high discriminability play a major role in dividing the global feature space, which can produce pseudo-labels of high confidence and overcome the negative effects of unclear clustering structures. 
(c)--(d) The pseudo-labels can be used to lead all views to learn more discriminative features, which further produce clearer clustering structures and more accurate pseudo-labels.

Specially, the proposed \emph{self-supervised multi-view discriminative feature learning} framework is shown in Fig. \ref{fig:framework}.
For each view, a corresponding autoencoder is employed to learn low-dimensional embedded features $\pmb{z}^v$ by optimizing the reconstruction loss $\mathcal{L}_r^v$. 
The discriminability of different views' embedded features is different.
To leverage the global discriminative information, we concatenate all embedded features to build global features, pseudo-labels, and a unified target distribution, in sequence.
During this process, the feature complementarity can be mined and the negative effects of the views with unclear clustering structures can be overcome.
Then, the feature learning is performed by optimizing the KL divergence $\mathcal{L}_c^v$ between the unified target distribution $P$ and each view's cluster assignments $Q^v$.
In this way, global discriminative information can be used to lead all views to learn more discriminative features, which in turn help to obtain complementary information and a more accurate target distribution. We further show that SDMVC is able to learn the consistent cluster assignments by optimizing the multi-view clustering loss, which leverages the proposed unified target distribution to explore the clustering consistency across all views.
Additionally, the proposed independent training for each view allows all views' features to preserve their diversity so as to be robust to the views with different discriminability.

In summary, the contributions of this paper include:
\begin{itemize}
\item We propose a deep MVC method with a novel self-supervised multi-view discriminative feature learning framework, which can leverage the global discriminative information contained in all views' embedded features to perform multi-view clustering.
\item We theoretically analyze the generalization bound of the proposed loss function, which demonstrates that the expected clustering risk of our method is bounded by the empirical risk on training samples with high probability.
\item From the perspective of cluster assignments rather than features, the proposed framework simultaneously encourages the clustering consistency of multiple views while preserving their features' diversity. In addition, it can overcome the negative impact on multi-view clustering caused by some views with unclear clustering structures.
\item The proposed method has less hyper-parameters and its complexity is linear to the data size. Experiments on different types of datasets demonstrate its superior performance over recent state-of-the-art methods.
\end{itemize}

\section{Related Work}
\label{sec:related}
\subsection{Deep Embedded Clustering}
Deep autoencoder, one of the popular deep models, has good embedded feature representation capability and its computing complexity is linear to data size.
Recently, deep autoencoder based clustering has been extensively studied.
For example, one of the most popular works is deep embedded clustering (DEC) \cite{xie2016unsupervised}, which jointly learns the cluster assignments and embedded features of autoencoders. The improved deep embedded clustering (IDEC) \cite{guo2017improved} introduced a trade-off between clustering and reconstruction to prevent the collapse of embedded space. \cite{ghasedi2017deep} proposed another variant of DEC which stacks multinomial logistic regression on top of a convolutional autoencoder. More researches based on deep embedded clustering can be found in \cite{guo2018deep,ren2019semi,2020Joint}.
Although many works based on deep autoencoder have achieved impressive performance \cite{xie2016unsupervised,ghasedi2017deep,yang2017towards,ren2020deep}, these methods focus on single-view clustering tasks and are unable to handle multi-view clustering tasks.
In this paper, we propose a novel deep multi-view clustering framework where discriminative embedded features are learned in a self-supervised manner.

\subsection{Multi-View Clustering}
Subspace-based multi-view clustering is widely discussed, which assumes that the data of multiple views come from a same latent space. In \cite{li2019reciprocal}, the authors explored self-representation layers to reconstruct view-specific subspace hierarchically and encoding layers to make cross-view subspace more consensus. Recently, in \cite{xie2019multiview} and \cite{zheng2020feature}, multi-view data were transformed into joint representations to perform multi-view clustering. Another work \cite{chen2020multi} learned multi-view embeddings and jointly mined common structure and cluster assignments in the learned latent space. 
Some multi-view clustering methods were implemented by using non-negative matrix factorization techniques, e.g., by which Liu \emph{et al.} \cite{liu2013multi} explored a common latent factor among multiple views. A deep structure \cite{zhao2017multi} was built to seek common features with more consistent information. 
Some methods \cite{nie2017self,wang2019gmc,huang2020auto,tang2020cgd,li2021consensus} exploited graph-based models for multi-view clustering. For example, a multi-graph laplacian regularized low-rank representation was proposed in \cite{wang2016iterative} for multi-view spectral clustering. Peng \emph{et al.} \cite{peng2019comic} mined geometry and cluster alignment consistency in projection space by connection graph. In \cite{fan2020one2multi}, graph autoencoders were also introduced to learn multi-view representations.

Some works are based on other learning approaches. For instance, Zhang \emph{et al.} \cite{zhang2018binary} incorporated discrete representation learning and binary clustering into a unified framework. In \cite{ren2020self}, self-paced learning was applied to prevent being stuck in local optima.
\cite{li2018deep} proposed a novel deep collaborative embedding method, which could effectively explore the multi-view information and achieve the promising performance for social image understanding.
In addition, deep learning is an attention-getting trend.
For example, 
Zhu \emph{et al.} \cite{zhu2019multi} trained deep autoencoders to obtain self-representation and leveraged the diversity and universality regularization to abstract higher-order relation among views.
Adversarial learning based deep MVC was proposed in \cite{li2019deep,Zhou2020end}, which aim to learn the intrinsic embedded structure of multi-view data.
Self-supervised learning is the recent hot topic of the community. The framework proposed in \cite{sun2019self} combined a self-supervised paradigm with multi-view clustering. However, it belongs to subspace clustering and depends on the eigenvalue decomposition, which causing cubic complexity to the data size.

For multi-view clustering, it is usual to obtain consistent predictions of multiple views for the same examples.
Wang \emph{et al.} \cite{wang2019multi} maximized the alignment between weighted view-specific partition and consensus partition.
The referred view was introduced in \cite{xu2021deep} to explicitly train all views to achieve consistent predictions.
\cite{liu2014partially} proposed a novel semi-supervised multi-view learning algorithm to jointly explore the complementary and consistent information across multiple views.
The latest work \cite{trostenMVC} introduced two deep MVC models when aligning the distributions of multi-view representations.
Unlike previous methods, our approach achieves consistent multi-view clustering predictions by establishing a unified target distribution.
Besides, the self-supervised multi-view discriminative feature learning framework is first proposed in our work.

\section{The Proposed Method}\label{sec:proposed method}
\textbf{Problem Statement}. Given a multi-view dataset $\{\pmb{x}^1_i \in \mathbb{R}^{D_1}, \pmb{x}^2_i \in \mathbb{R}^{D_2}, \dots, \pmb{x}^V_i \in \mathbb{R}^{D_V} \}_{i=1}^N $, $V$ is the total number of views, $N$ is the number of examples, and $D_1, D_2, \dots, D_V$ are the dimensionality of views. Multi-view clustering aims to partition the examples into $K$ clusters.
\subsection{Self-Supervised Multi-View Discriminative Feature Learning}
Generally, multiple views of an example are different in dimension or input form. In order to obtain the features that are convenient to clustering, we use deep autoencoders to perform feature representation learning for each view. Specifically, let $f_{\theta^v}^v$ and $g_{\phi^v}^v$ represent the encoder and decoder of the $v$-th view, respectively. The parameters $\theta^v$ and $\phi^v$ implement the nonlinear mapping of the $v$-th autoencoder, whose encoder part learn the low-dimensional features by
\begin{equation}\label{eq:encoder}
    \pmb{z}_i^v=f^v_{\theta^v}{(\pmb{x}_i^v)},
\end{equation}
\noindent where $\pmb{z}_i^v \in \mathbb{R}^{d_v}$ is the embedded point of $\pmb{x}_i^v$ in $d_v$-dimensional feature space. The decoder part of autoencoder reconstructs the example as $\hat{\pmb{x}}_i^v \in \mathbb{R}^{D_{v}}$ by decoding $\pmb{z}_i^v$:
\begin{equation}\label{eq:decoder}
    \hat{\pmb{x}}_i^v=g^v_{\phi^v}{(\pmb{z}_i^v)}=g^v_{\phi^v}{(f^v_{\theta^v}{(\pmb{x}_i^v)})}.
\end{equation}
For each view, the reconstruction loss between input and output is optimized, so as to transform the input of various forms into low-dimensional embedded features:
\begin{equation}\label{eq:Lr}
    \mathcal{L}_r^v= \sum_{i=1}^N
    \left\|
    {\pmb{x}_i^v-g^v_{\phi^v}{(f^v_{\theta^v}{(\pmb{x}_i^v)})}}
    \right\|
    _2^2.
\end{equation}

On the top of each view's embedded features, we construct a clustering layer $c^v_{\mu^v}$ with learnable parameters $\{\pmb{\mu}_j^v \in \mathbb{R}^{d_v}\}_{j=1}^K$. $\pmb{\mu}_j^v$ represents the $j$-th cluster centroid of the $v$-th view. Additionally, DEC \cite{xie2016unsupervised} and IDEC \cite{guo2017improved} are popular single-view deep clustering methods which apply Student's $t$-distribution~\cite{maaten2008visualizing} to generate soft cluster assignments. A similar way is applied to calculate the output $Q^v$ of the $v$-th view's clustering layer, which can be described as:
\begin{equation}\label{eq:q}   
    q_{ij}^v=c^v_{\mu^v}{(\pmb{z}_i^v)}=\frac{(1+\lVert \pmb{z}_i^v-\pmb{\mu}_j^v\rVert^2)^{-1}}
    {\sum_{j}(1+\lVert \pmb{z}_i^v-\pmb{\mu}_{j}^v\rVert^2)^{-1}},
\end{equation}
\noindent where $q_{ij}^v \in Q^v$ is the probability (soft cluster assignment) that the $i$-th example belongs to the $j$-th cluster in the $v$-th view. As shown in Fig. \ref{fig:framework}, our framework has $V$ clustering layers and $V$ autoencoders. All view's embedded features are learned independently, which is essential to learn each view's special characteristics that can provide complementary information for multi-view clustering.

Actually, the soft assignments calculated by Eq.~(\ref{eq:q}) measure the similarity between embedded features and cluster centroids, so the clustering performance depends on the discriminative degree of embedded features. The discriminability (discriminative degree) of different views is different due to clear or unclear clustering structures. Since embedded feature $\pmb{z}_i^v$ only contains the discriminative information of the $v$-th view, we start with a global perspective and then define a unified target distribution to conduct multi-view discriminative feature learning.
Concretely, to leverage the discriminative information across all views, we concatenate all embedded features 
% (scaled to a same range) 
to generate the global feature:
\begin{equation}\label{eq:gf}
    \pmb{z}_i=[\pmb{z}_i^1 ;\ \pmb{z}_i^2 ;\ \dots ;\ \pmb{z}_i^V] \in \mathbb{R}^{\sum_{v=1}^V d_v}.
\end{equation}
After that, $K$-means \cite{macqueen1967some} is applied on the global features to calculate the cluster centroids: 
\begin{equation}\label{eq:kmeans}
    \mathop{\min_{\pmb{c}_1,\pmb{c}_2,\dots,\pmb{c}_K}}{\sum_{i=1}^N\sum_{j=1}^K \left\| {\pmb{z}_i-\pmb{c}_j} \right\| ^2},
\end{equation}
\noindent where $\pmb{c}_j$ is the $j$-th cluster centroid. Letting $\pmb{c}_j^1,\pmb{c}_j^2,\dots,\pmb{c}_j^V$ denote the components of cluster centroids $\pmb{c}_j$, i.e., $\pmb{c}_j = [\pmb{c}_j^1 ;\ \pmb{c}_j^2 ;\ \dots ;\ \pmb{c}_j^V]$, we then investigate the roles of the views with clear structures (high discriminative views) on the views with unclear structures (low discriminative views). Concretely, Eq. (\ref{eq:kmeans}) can be rewritten as follows:
\begin{equation}\label{7}
\begin{aligned}
    &\mathop{\min_{\pmb{c}_1,\pmb{c}_2,\dots,\pmb{c}_K}}{\sum_{i=1}^N\sum_{j=1}^K \left\| {\pmb{z}_i-\pmb{c}_j} \right\| ^2} \\
    % =& \mathop{\min_{\pmb{c}_1^v,\pmb{c}_2^v,\dots,\pmb{c}_K^v}}{\sum_{v=1}^V \sum_{i=1}^N\sum_{j=1}^K \left\| {\pmb{z}_i^v-\pmb{c}_j^v} \right\| ^2} \\
    =& \sum_{v=1}^V \mathop{\min_{\pmb{c}_1^v,\pmb{c}_2^v,\dots,\pmb{c}_K^v}}{\sum_{i=1}^N \sum_{j=1}^K \left\| {\pmb{z}_i^v-\pmb{c}_j^v} \right\| ^2}, \\
    s.&t.~\pmb{c}_j = [\pmb{c}_j^1 ;\ \pmb{c}_j^2 ;\ \dots ;\ \pmb{c}_j^V], ~j\in \{1, 2, \dots, K\}. 
\end{aligned}
\end{equation}
Furthermore, we take two clusters in two views and a global feature $\pmb{z}_i = [\pmb{z}_i^1 ;\ \pmb{z}_i^2]$ as an example. For $\pmb{z}_i$, we have
\begin{equation}\label{8}
\begin{aligned}
    & \mathop{\min_{\pmb{c}_1^1,\pmb{c}_2^1}}{\left(\left\| {\pmb{z}_i^1-\pmb{c}_1^1} \right\| ^2 + \left\| {\pmb{z}_i^1-\pmb{c}_2^1} \right\| ^2\right)} \\
    + & \mathop{\min_{\pmb{c}_1^2,\pmb{c}_2^2}}{\left(\left\| {\pmb{z}_i^2-\pmb{c}_1^2} \right\| ^2 + \left\| {\pmb{z}_i^2-\pmb{c}_2^2} \right\| ^2\right)}.
\end{aligned}
\end{equation}
Suppose the clustering structures of the 1-st view are unclear, which makes $\pmb{z}_i^1$ have no clear cluster assignment. In the extreme, $\pmb{z}_i^1$ has equal distances to the two cluster centroids, i.e., $\left\| {\pmb{z}_i^1-\pmb{c}_1^1} \right\| ^2 = \left\| {\pmb{z}_i^1-\pmb{c}_2^1} \right\| ^2$. Meanwhile, the clustering structures of the 2-nd view are clear, which makes $\pmb{z}_i^2$ have clear cluster assignment, e.g., $\left\| {\pmb{z}_i^2-\pmb{c}_1^2} \right\| ^2 > \left\| {\pmb{z}_i^2-\pmb{c}_2^2} \right\| ^2$. Then, we have
\begin{equation}\label{9}
\begin{aligned}
    \left\| {\pmb{z}_i-\pmb{c}_1} \right\| ^2 = &\left\| {\pmb{z}_i^1-\pmb{c}_1^1} \right\| ^2 + \left\| {\pmb{z}_i^2-\pmb{c}_1^2} \right\| ^2 \\
    > & \left\| {\pmb{z}_i^1-\pmb{c}_2^1} \right\| ^2 + \left\| {\pmb{z}_i^2-\pmb{c}_2^2} \right\| ^2 = \left\| {\pmb{z}_i-\pmb{c}_2} \right\| ^2. \\
\end{aligned}
\end{equation}
This indicates that the global feature $\pmb{z}_i$ is assigned to the same cluster (i.e., $\pmb{c}_2 = [\pmb{c}_2^1;\pmb{c}_2^2]$) as the view with clear structures. 
Therefore, when applying $K$-means on the global features, the features of some high discriminative views play a major role in dividing the global feature space. This process guarantees that the pseudo soft assignments achieve the robustness to the unclear clustering structures, which allows the model to overcome the negative effects caused by the low discriminative views.

Then, we can further choose the Student's $t$-distribution as a kernel to measure the similarity between global feature $\pmb{z}_i$ and cluster centroid $\pmb{c}_j$. The similarity is used to generate pseudo soft assignment (pseudo-label) $s_{ij}$ to perform self-supervised learning, which is calculated by
\begin{equation}\label{eq:pk}        
    s_{ij}=\frac{(1+\lVert \pmb{z}_i-\pmb{c}_j\rVert^2)^{-1}}
    {\sum_{j}(1+\lVert \pmb{z}_i-\pmb{c}_{j}\rVert^2)^{-1}}. 
\end{equation}
In general, high probability components in pseudo soft assignments represent high confidence.
To increase the discriminability of the pseudo soft assignments, we enhance them to obtain a unified target distribution (denoted as $P$) by:
\begin{equation}\label{eq:p}        
    p_{ij}=\frac{{(s_{ij})^2}/{\sum_i s_{ij}}}
    {\sum_{j}({(s_{i{j}})^2}/{\sum_i s_{i{j}}})}.
\end{equation}
To lead all autoencoders to learn higher discriminative embedded features, $P$ is used in all views' clustering-oriented loss function. Specifically, for the $v$-th view, the clustering loss $\mathcal{L}_c^v$ is the Kullback-Leibler divergence ($D_{KL}$) between the unified target distribution $P$ and its own cluster assignment distribution $Q^v$:
\begin{equation}\label{eq:Lc}
    \mathcal{L}_c^v= D_{KL}(P||Q^v)= \sum_{i=1}^N\sum_{j=1}^K p_{ij}\log{\frac{p_{ij}}{q_{ij}^v}}.
\end{equation}

So, the total loss of each view consists of two parts:
\begin{equation}\label{eq:LOSS}
\begin{aligned}
    \mathcal{L}^v= \mathcal{L}_r^v + \gamma \mathcal{L}_c^v.  
\end{aligned}
\end{equation}
\noindent where $\gamma$ is the trade-off coefficient. The reconstruction loss $\mathcal{L}_r^v$ (Eq.~(\ref{eq:Lr})) can be regarded as the regularization of embedded features, which ensures that the low-dimensional features can maintain the representation capability for examples. The optimization of clustering loss $\mathcal{L}_c^v$ makes the $v$-th view's autoencoder learn more discriminative features. 

Consequently, after optimizing ${\{\mathcal{L}^1, \mathcal{L}^2, \dots, \mathcal{L}^V\}}$, more discriminative information contained in multiple views can be mined. We further leverage the learned features to produce a more accurate target distribution with Eqs.~(\ref{eq:gf})--(\ref{eq:p}). 
Therefore, by performing the proposed feature learning, global discriminative information can be used to iteratively lead all views to learn more discriminative features, especially for those views with low discriminative features. Eventually, the discriminability of all views' embedded features is improved and thus clearer clustering structures can be obtained.

\subsection{The generalization bound}
To derive the generalization bound of the proposed method, we introduce the hypothesis function $h:\mathcal{X}\to \mathbb{R}^{D_v} \times \mathbb{R}^{d_v} \times \mathbb{R}^{d_v}$ that maps the given sample point into the reconstruction sample, embedding point and cluster centroid, where $\mathcal{X}$ denotes the input space. The family of $h$ is denoted as $\mathcal{H}$. Then the reconstruction sample, embedding point and cluster centroid are given by $\hat{\bm{x}}^v_i:=h_{\bm z}(\bm{x}^v_i) \in \mathbb{R}^{D_v}$, ${\bm{z}}^v_i:=h_{\bm z}(\bm{x}^v_i) \in \mathbb{R}^{d_v}$, and ${\bm \mu}:= h_{\bm \mu} \in \mathbb{R}^{d_v}$, respectively. Then the unified target distribution is formulated as $p_{ij}:=p(h_{\bm z}(\bm{x}_i),{h_{\bm \mu}}_j)$. The loss of the $v$-th view is
\begin{equation}\small
\begin{aligned}
    & \mathcal{L}^v_c = \frac{1}{N}\sum_{i=1}^N \Vert \bm{x}^v_i - h_{\bm z}(\bm{x}^v_i) \Vert^2 \\
    & + \frac{\gamma}{N}\sum_{i=1}^N \sum_{j=1}^K p(h_{\bm z}(\bm{x}_i),{h_{\bm \mu}}_j) \log \frac{p(h_{\bm z}(\bm{x}_i),{h_{\bm \mu}}_j)}{q^v_{ij}} \\
    & = \frac{1}{N}\sum_{i=1}^N \Vert \bm{x}^v_i - h_{\bm z}(\bm{x}^v_i) \Vert^2 \\
    & + \frac{\gamma}{N}\sum_{i=1}^N \sum_{j=1}^K p(h_{\bm z}(\bm{x}_i),{h_{\bm \mu}}_j) \log p(h_{\bm z}(\bm{x}_i),{h_{\bm \mu}}_j) \\
    & - \frac{\gamma}{N}\sum_{i=1}^N \sum_{j=1}^K p(h_{\bm z}(\bm{x}_i),{h_{\bm \mu}}_j) \log q^v_{ij}.
\end{aligned}
\end{equation}
According to the definition of $q$, the empirical risk is defined as
\begin{equation}\small
\begin{aligned}
    & \hat{\mathcal{L}}(h) = \frac{1}{N}\sum_{i=1}^N \Vert \bm{x}^v_i - h_{\bm z}(\bm{x}^v_i) \Vert^2 \\
    & + \frac{\gamma}{N}\sum_{i=1}^N \sum_{j=1}^K p(h_{\bm z}(\bm{x}_i),{h_{\bm \mu}}_j) \log p(h_{\bm z}(\bm{x}_i),{h_{\bm \mu}}_j)\\
    & + \frac{\gamma}{N}\sum_{i=1}^N \sum_{j=1}^K p(h_{\bm z}(\bm{x}_i),{h_{\bm \mu}}_j) \log \left(1+\Vert h_{\bm{z}}(\bm{x}^v_i)-{h_{\bm \mu}}^v_j\Vert^2\right) \\
    & + \frac{\gamma}{N}\sum_{i=1}^N \log \left( \sum_{j=1}^K {(1+\Vert h_{\bm z}(\bm{x}^v_i)-{h_{\bm \mu}}^v_j\Vert^2)}^{-1} \right).
\end{aligned}
\end{equation}
\begin{theorem}\label{the:t2}
Let $\mathcal{L}(h)$ be the expectation of $\hat{\mathcal{L}}(h)$. Suppose that for any $\bm x \in \mathcal{X}$ and $h\in \mathcal{H}$, there exists $M  < \infty$ such that $\Vert{\bm x}^v\Vert, \Vert h_{\bm x}({\bm x})\Vert, \Vert h_{\bm z}({\bm x}) \Vert, \Vert {h_{\bm \mu}} \Vert \in [0, M] $ hold. With probability $1-\delta$ for any $h \in \mathcal{H}$, the following inequality holds
\begin{equation}\small
    \hat{\mathcal{L}}(h) \leq \mathcal{L}(h) + \frac{c_1}{\sqrt{N}} + c_2\sqrt{\frac{\log \frac{1}{\delta}}{2N}},
\end{equation}
where $c_1$ and $c_2$ are constants that depend on $K$, $M$ and $\gamma$.
\end{theorem}
The proof of Theorem~\ref{the:t2} is shown in Appendix, which demonstrates that the expected clustering risk of the proposed method is bounded by sample-dependent complexity terms and the empirical risk of the proposed model on training samples with high probability. Since lower risk represents superior clustering performance, the proposed model is guaranteed to achieve promising clustering performance on new unseen multi-view samples.

\subsection{Consistent Multi-View Clustering}
In the $v$-th view, the clustering prediction of the $i$-th example is calculated by
\begin{equation}\label{eq:yv}
    y_i^v=\arg\mathop{\max_j}{(q_{ij}^v)}.
\end{equation}
On account of no label information in clustering, we do not even know which view's clustering prediction is more accurate. But, the following theorem guarantees the multi-view clustering consistency, i.e., multiple views have consistent clustering predictions for the same examples.
\begin{theorem}
Minimizing multiple KL divergence with a unified $P$ makes the $Q^v$ of multiple views tend to be consistent.
\label{the:t1}
\end{theorem}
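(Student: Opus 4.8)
The plan is to exploit the single defining property of the Kullback--Leibler divergence that drives the whole argument: for any two distributions over the same (here, strictly positive) support, $D_{KL}(P\|Q^v)\ge 0$, with equality if and only if $Q^v=P$ (Gibbs' inequality). Because every view's clustering loss $\mathcal{L}_c^v=D_{KL}(P\|Q^v)$ in Eq.~(\ref{eq:Lc}) is anchored to the \emph{same} unified target $P$, minimizing the collection $\{\mathcal{L}_c^1,\dots,\mathcal{L}_c^V\}$ simultaneously pulls every $Q^v$ toward this common distribution. In the idealized limit where each loss attains its minimum, $q_{ij}^v=p_{ij}$ for all $i,j$ and all $v$, so $Q^{v_1}=Q^{v_2}=P$ for any pair of views, which is precisely the asserted consistency.

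To make the phrase ``tend to be consistent'' quantitative for finite losses, I would control the pairwise discrepancy between views by routing through $P$. Using Pinsker's inequality, the per-example total variation distance obeys $\tfrac12\sum_{j}|p_{ij}-q_{ij}^v|\le\sqrt{\tfrac12\,D_{KL}(p_i\|q_i^v)}$, where $p_i$ and $q_i^v$ denote the $i$-th rows of $P$ and $Q^v$. The triangle inequality then yields $\tfrac12\sum_j|q_{ij}^{v_1}-q_{ij}^{v_2}|\le\sqrt{\tfrac12\,D_{KL}(p_i\|q_i^{v_1})}+\sqrt{\tfrac12\,D_{KL}(p_i\|q_i^{v_2})}$, and summing over $i$ bounds the discrepancy between any two views' assignment distributions by a function of their individual clustering losses. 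Hence, as the optimization decreases $\mathcal{L}_c^{v_1}$ and $\mathcal{L}_c^{v_2}$, the distributions $Q^{v_1}$ and $Q^{v_2}$ are provably forced closer together.

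Finally I would translate distributional convergence into agreement of the hard predictions of Eq.~(\ref{eq:yv}). When $Q^{v_1}=Q^{v_2}=P$ exactly, $y_i^{v_1}=\arg\max_j q_{ij}^{v_1}=\arg\max_j p_{ij}=y_i^{v_2}$, so all views assign every example to the same cluster. In the near-optimal regime, the small total-variation bound above keeps the argmax stable across views except on examples whose two largest components of $p_i$ are nearly tied, which is exactly where consistency is weakest, consistent with the ``tend to'' wording of the statement.

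The main obstacle is that $Q^v$ is not a free variable: it is constrained to the Student's-$t$ form of Eq.~(\ref{eq:q}), parameterized by the encoder $f^v_{\theta^v}$ and the centroids $\pmb{\mu}_j^v$, so exact equality $Q^v=P$ need not be attainable; moreover, $P$ is itself recomputed from the global features built out of all the $Q^v$ (Eqs.~(\ref{eq:gf})--(\ref{eq:p})), creating a coupled, alternating dynamic rather than a single static minimization. I would resolve this the way the algorithm does, treating $P$ as fixed within each feature-learning phase so that the Gibbs/Pinsker argument applies per phase, and argue that the theorem's conclusion holds in the limit reached under sufficient network capacity to realize $Q^v\approx P$.
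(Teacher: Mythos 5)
Your proposal is correct, and its core is the same as the paper's: every view's clustering loss is a KL divergence anchored to the \emph{same} target $P$, so driving each loss to zero forces $Q^v\to P$ for all $v$ and hence $Q^{v_1}\to Q^{v_2}$ for any pair. Where you differ is in how the pairwise discrepancy is controlled. The paper subtracts the two inequalities $0\le D_{KL}(P\|Q^a)\le\xi_a$ and $0\le D_{KL}(P\|Q^b)\le\xi_b$ to sandwich the cross term $\sum_{i,j}p_{ij}\log\bigl(q_{ij}^a/q_{ij}^b\bigr)$ between $-\xi_a$ and $\xi_b$, and reads its vanishing as $Q^b\to Q^a$. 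That cross term is just the signed difference $D_{KL}(P\|Q^b)-D_{KL}(P\|Q^a)$, not a divergence between $Q^a$ and $Q^b$, so its smallness alone does not certify closeness of the two distributions; the paper's argument really rests on both individual KLs vanishing. Your route via Pinsker's inequality plus the triangle inequality in total variation, $\tfrac12\sum_j|q_{ij}^{v_1}-q_{ij}^{v_2}|\le\sqrt{\tfrac12 D_{KL}(p_i\|q_i^{v_1})}+\sqrt{\tfrac12 D_{KL}(p_i\|q_i^{v_2})}$, replaces that weak link with a genuine metric bound, and is therefore a strictly more rigorous quantification of ``tend to be consistent.'' You also go beyond the paper in two useful ways: translating distributional closeness into agreement of the argmax predictions of Eq.~(\ref{eq:yv}) (with the correct caveat about near-ties in $p_i$), and flagging that $Q^v$ is constrained to the Student's-$t$ family and $P$ is periodically recomputed, so exact attainment of $Q^v=P$ is an idealization --- a realizability issue the paper's proof silently assumes away with ``when $P$ is fixed.''
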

\begin{proof}
The optimization of multi-view clustering loss of our proposed SDMVC is:
\begin{equation}
    \mathop{\min_{Q^v}}{\mathcal{L}_c^v = D_{KL}(P||Q^v), v \in {\{1, 2, \dots, V\}}}.
\end{equation}
For any two views, $a$ and $b$ $\in {\{1, 2, \dots, V\}}$, let $\xi_a$ and $\xi_b$ denote the optimization error of $\mathcal{L}_c^a$ and $\mathcal{L}_c^b$, respectively. Given non-negative of KL divergence, we obtain the following two inequalities:
\begin{equation}\label{eq:KLa}
    0 \leq D_{KL}(P||Q^a)= \sum_{i=1}^N\sum_{j=1}^K p_{ij}\log{\frac{p_{ij}}{q_{ij}^a}} \leq \xi_a,
\end{equation}
\begin{equation}\label{eq:KLb}
    0 \leq D_{KL}(P||Q^b)= \sum_{i=1}^N\sum_{j=1}^K p_{ij}\log{\frac{p_{ij}}{q_{ij}^b}} \leq \xi_b.
\end{equation}
% Eq.~(\ref{eq:KLb})$+$($-$Eq.~(\ref{eq:KLa})), 
By subtracting Eq.~(\ref{eq:KLa}) from Eq.~(\ref{eq:KLb}) (i.e., Eq.~(\ref{eq:KLb})$-$Eq.~(\ref{eq:KLa})), the following inequality holds:
\begin{equation}
    -\xi_a \leq \sum_{i=1}^N\sum_{j=1}^K p_{ij}\log{\frac{q_{ij}^a}{q_{ij}^b}} \leq \xi_b.
\end{equation}
When $P$ is fixed, minimizing $\xi_a \to 0$ and $\xi_b \to 0$ result in
\begin{equation}
    \sum_{i=1}^N\sum_{j=1}^K p_{ij}\log{\frac{p_{ij}}{q_{ij}^a}} \to 0. \quad i.e. \quad Q^a \to P,
\end{equation}
\begin{equation}
    \sum_{i=1}^N\sum_{j=1}^K p_{ij}\log{\frac{p_{ij}}{q_{ij}^b}} \to 0. \quad i.e. \quad Q^b \to P,
\end{equation}
and
\begin{equation}
    \sum_{i=1}^N\sum_{j=1}^K p_{ij}\log{\frac{q_{ij}^a}{q_{ij}^b}} \to 0. \quad i.e. \quad Q^b \to Q^a.
\end{equation}
Therefore, $Q^a$ and $Q^b$ tend to be consistent with each other. This conclusion can be easily generalized to the case of multiple views.
\end{proof}

Theorem \ref{the:t1} indicates that our method can obtain consistent soft cluster assignments in multiple views. Based on this, averaging multiple soft cluster assignments can avoid the interference of a few wrong predictions and achieve the definite clustering predictions of higher confidence. Therefore, the final clustering prediction is calculated by
\begin{equation}\label{eq:s}
    y_i=\arg\mathop{\max_j}{(\frac{1}{V}\sum_{v=1}^V q_{ij}^v)}.
\end{equation}

Considering the diversity of different views, it is unreasonable to expect all views to have consistent predictions for all examples. We define the $i$-th example is aligned when $y_i^1=y_i^2=\dots=y_i^V$. Accordingly, we count the rate of aligned examples in all examples, called ``Aligned Rate'', to determine the stop condition of the model. Since it is also an unsupervised process, we can stop training when a high Aligned Rate is achieved so as to ensure the multi-view clustering consistency.

The proposed pipeline is illustrated in Fig. \ref{fig:framework}. In conclusion, the multi-view clustering consistency is achieved by setting a unified target distribution. As optimizing $\mathcal{L}^v$ only affects the autoencoder and clustering layer of the $v$-th view, the optimization of each view's embedded features and cluster centroids is independent for other views. As a result, the proposed framework can achieve the clustering consistency across all views in $\{Q^v\}_{v=1}^V$ while preserving their features' diversity in $\{\pmb{z}^v\}_{v=1}^V$. This design is conducive to reduce the negative influence of the views with unclear clustering structures.
\subsection{Optimization}\label{sec:op}
In the beginning, the parameters of autoencoders are initialized randomly. To obtain effective target distribution, the autoencoders are pre-trained by Eq.~(\ref{eq:Lr}). After that, $K$-means is applied to initialize the learnable cluster centroids $\{\{\pmb{\mu}_j^v\}_{j=1}^K\}_{v=1}^V$. For the $v$-th view, the parameters to be trained are $\theta^v$ of encoder, $\phi^v$ of decoder, and $\pmb{\mu}_j^v$ of clustering layer. 

The target distribution $P$ is fixed during the discriminative feature learning. The gradients of $\mathcal{L}_c^v$ corresponding to cluster centroids $\pmb{\mu}_j^v$ and embedded features $\pmb{z}_i^v$ are
\begin{equation}
    \frac{\partial \mathcal{L}_c^v}{\partial \pmb{\mu}_j^v} = 2 \sum_{i=1}^N (1 + \left\| \pmb{z}_i^v - \pmb{\mu}_j^v \right\|^2 )^{-1} (q_{ij}^v-p_{ij}) (\pmb{z}_i^v - \pmb{\mu}_j^v)
\end{equation}
\noindent and
\begin{equation}
    \frac{\partial \mathcal{L}_c^v}{\partial \pmb{z}_i^v} = 2 \sum_{j=1}^K (1 + \left\| \pmb{z}_i^v - \pmb{\mu}_j^v \right\|^2 )^{-1} (p_{ij}-q_{ij}^v) (\pmb{z}_i^v - \pmb{\mu}_j^v).
\end{equation}

We use mini-batch gradient descent optimization and backpropagation algorithms to train the model. Let $n$ and $\lambda$ denote the batch size and the learning rate, respectively. The method is summarized in Algorithm \ref{alg:alg1}.
After a fixed iterations, the target distribution will be updated so that the autoencoders can learn more discriminative features. Please refer to Section \ref{sec:Details} for specific experimental settings.
\begin{algorithm}[!t]\caption{: Self-Supervised Discriminative Feature Learning for Deep Multi-View Clustering (SDMVC)} 
\label{alg:alg1}
\begin{algorithmic}[1]
\REQUIRE ~~\\
Multi-view dataset; Number of clusters $K$; \\
Trade-off coefficient $\gamma$; Stop threshold $\delta$.\\
\ENSURE ~~\\
Cluster assignments $\pmb{y}$.\\
\STATE Pre-train each view's deep autoencoder by Eq.~(\ref{eq:Lr}).
\STATE Initialize each view's cluster centroids by $K$-means.\\
\WHILE{not reaching the stop condition}
\STATE Calculate centroids on global features by Eqs.~(\ref{eq:gf}, \ref{eq:kmeans}).
\STATE Update the target distribution $P$ by Eqs.~(\ref{eq:pk}, \ref{eq:p}).
\STATE Count the Aligned Rate of $\{\pmb{y}^1, \pmb{y}^2, \dots, \pmb{y}^V\}$.
\IF{Aligned Rate $> \delta$}
\STATE Stop training.
\ENDIF
\FOR{fixed target distribution $P$}
\STATE Fine-tune all autoencoders' parameters by: 
% Eqs.~(\ref{eq:dmu}, \ref{eq:dphi}, \ref{eq:dthe})
\STATE \quad $\pmb{\mu}_j^v = \pmb{\mu}_j^v - \frac{\lambda}{n} \sum_{i=1}^n \frac{\partial \mathcal{L}_c^v}{\partial \pmb{\mu}_j^v}$,
\STATE \quad $\phi^v = \phi^v - \frac{\lambda}{n} \sum_{i=1}^n \frac{\partial \mathcal{L}_r^v}{\partial \phi^v}$, and
\STATE \quad $\theta^v = \theta^v - \frac{\lambda}{n} \sum_{i=1}^n \left(\frac{\partial \mathcal{L}_r^v}{\partial \theta^v} + \gamma \frac{\partial \mathcal{L}_c^v}{\partial \theta^v} \right)$.
\ENDFOR
\ENDWHILE
\STATE Output $\pmb{y}$ calculated by Eq.~(\ref{eq:s}).
\end{algorithmic}
\end{algorithm}

\textbf{Complexity Analysis}. $K, V,$ and $N$ are the number of clusters, views, and examples, respectively. Let $H$ denote the maximum number of neurons in autoencoders' hidden layers and $Z$ denote the maximum dimensionality of embedded features. Generally $V, K, Z \ll H, N$ holds. Both complexities of $K$-means and calculating the target distribution are $O(NZK)$. The complexity to count the Aligned Rate is $O(NK)$. The complexity of $V$ autoencoders is $O(VNH^2)$. In conclusion, the complexity of our algorithm is linear to the data size $N$.

\section{Experimental Setup}\label{sec:experiments}
\subsection{Datasets}\label{sec:Datasets}
The experiments are conducted on the following public datasets for comparison and analysis.

\begin{itemize}
    \item \textbf{MNIST-USPS}: MNIST\footnote{http://yann.lecun.com/exdb/mnist}~\cite{lecun1998gradient} and USPS\footnote{http://www.cad.zju.edu.cn/home/dengcai/Data/MLData.html}~\cite{hull1994database} are two image datasets about handwritten digits with two styles. These two datasets are always treated as two different views of digits in multi-view clustering. The same dataset as \cite{peng2019comic} is used in our experiment, where different views of a sample are different digits of the same category. Each view contains 10 categories and every category provides 500 examples.
    \item \textbf{Fashion-MV}: Fashion\footnote{https://github.com/zalandoresearch/fashion-mnist}~\cite{xiao2017fashion} contains 10 kinds of fashion products (such as T-shirt, Dress, and Coat). We follow the same manner to construct MNIST-USPS and use 30,000 examples to construct a multi-view version named Fashion-MV. It has three views and each of which consists of 10,000 gray images. Every three images sampled from the same category constitute three views of one instance.
    \item \textbf{BDGP}: Berkeley Drosophila Genome Project, i.e., BDGP\footnote{https://ranger.uta.edu/~heng/Drosophila}~\cite{cai2012joint} contains 2,500 images about drosophila embryos belonging to 5 categories, where 1,750-dimensional/dim visual feature and 79-dim textual feature of each image are extracted for multi-view clustering.
    \item \textbf{Caltech101-20}: \cite{zhang2018binary} constructs a multi-view dataset with 2,386 images sampled from a popular RGB image dataset Caltech101\footnote{http://www.vision.caltech.edu/Image\_Datasets/Caltech101}~\cite{fei2004learning}. It contains 20 categories and six different views, i.e., 48-dim Gabor, 40-dim wavelet moments (WM), 254-dim CENTRIST, 1,984-dim HOG, 512-dim GIST, and 928-dim LBP.
\end{itemize}

\begin{table*}[!t]
\centering
\renewcommand\tabcolsep{7.0pt} % 调整表格列间的长度
\begin{threeparttable}
    \begin{tabular}{r|ccc|ccc|ccc|ccc}
    \toprule
    &\multicolumn{3}{c|}{MNIST-USPS} &\multicolumn{3}{c|}{Fashion-MV} &\multicolumn{3}{c|}{BDGP} &\multicolumn{3}{c}{Caltech101-20}\cr
    \hline
    &\multicolumn{3}{c|}{2 views, $K=10$} &\multicolumn{3}{c|}{3 views, $K=10$} &\multicolumn{3}{c|}{2 views, $K=5$} &\multicolumn{3}{c}{6 views, $K=20$}\cr
    &\multicolumn{3}{c|}{5,000 examples} &\multicolumn{3}{c|}{10,000 examples} &\multicolumn{3}{c|}{2,500 examples} &\multicolumn{3}{c}{2,386 examples}\cr
    \hline
    Methods & ACC & NMI & ARI & ACC & NMI & ARI & ACC & NMI & ARI & ACC & NMI & ARI\\
    \hline
    $K$-means (1967) & 0.7678 & 0.7233 & 0.6353 & 0.7093 & 0.6561 & 0.5689 & 0.4324 & 0.5694 & 0.2604 & 0.4179 & 0.3351 & 0.2605 \\
    SC (2002) & 0.6596 & 0.5811 & 0.4864 & 0.5354 & 0.5772 & 0.4261 & 0.5172 & 0.5891 & 0.3156 & 0.4620 & 0.4589 & 0.3933 \\
    DEC (2016)  & 0.7310  & 0.7146 & 0.6323 & 0.6707 & 0.7234 & 0.6291 & 0.9478  & 0.8662 & 0.8702 & 0.4268  & 0.6251 & 0.3767 \\
    IDEC (2017) & 0.7658  & 0.7689 & 0.6801 & 0.6919 & 0.7501 & 0.6522 & 0.9596  & 0.8940 & 0.9025 & 0.4318  & 0.6253 & 0.3773 \\
    \hline
    BMVC (2018)    & 0.8802  & 0.8945 & 0.8448 & 0.7858 & 0.7488 & 0.6835 & 0.3492  & 0.1202 & 0.0833 & 0.5553  & 0.6203 & 0.5038 \\
    MVC-LFA (2019) & 0.7678  & 0.6749 & 0.6092 & 0.7910 & 0.7586 & 0.6887 & 0.5468  & 0.3345 & 0.2881 & 0.4221  & 0.5846 & 0.2994 \\
    GMC (2019)    & 0.9968 & 0.9903 & 0.9929 & 0.8321 & 0.8940 & 0.7749 & 0.5912 & 0.6261 & 0.4313 & 0.5671 & 0.5359 & 0.2284\\
    DAMC (2019)    & 0.7172 & 0.8085 & 0.6980 & 0.6492 & 0.6012 & 0.5487 & \underline{0.9822} & \underline{0.9461} & \underline{0.9437} & 0.3157 & 0.2849 & 0.2443\\
    SAMVC (2020)   & 0.6965  & 0.7458 & 0.6090 & 0.6286 & 0.6878 & 0.5665 & 0.5386  & 0.4625 & 0.2099 & 0.5218  & 0.5961 & 0.4653 \\
    CDIMC-net (2020) & 0.6203  & 0.6763 & 0.6338 & 0.7763 & 0.8104 & 0.7861 & 0.8827  & 0.7893 & 0.8194 & \underline{0.5815} & \underline{0.6903} & \underline{0.5652} \\
    EAMC (2020)    & 0.7304 & 0.8353 & 0.7215 & 0.6166 & 0.6113 & 0.5100 & 0.6756 & 0.4702 & 0.3931 & 0.3026  & 0.2633 & 0.2255\\
    SiMVC (2021)    & 0.9774 & 0.9630 & 0.9528 & 0.8272 & 0.8312 & 0.7761 & 0.6972 & 0.5326 & 0.4455 & 0.4208 & 0.6108 & 0.3343\\
    CoMVC (2021)    & 0.9847 & 0.9735 & 0.9801 & \underline{0.8498} & 0.8592 & \underline{0.7986} & 0.8068 & 0.6739 & 0.5928 & 0.4376 & 0.6149 & 0.3357\\
    DEMVC (2021)   & \underline{0.9976}  & \underline{0.9939} & \underline{0.9948} & 0.7864 & \underline{0.9061} & 0.7793 & 0.9548  & 0.8720 & 0.8901 & 0.5754  & 0.6874 & 0.5129 \\
    SDMVC (ours)   &\textbf{0.9982}&\textbf{0.9947} &\textbf{0.9960} &\textbf{0.8626}&\textbf{0.9215} &\textbf{0.8405} &\textbf{0.9835}&\textbf{0.9466} &\textbf{0.9551} &\textbf{0.7158}&\textbf{0.7176} &\textbf{0.7265} \\
    \bottomrule
    \end{tabular}
    \caption{Comparison results. The best values are highlighted in boldface and the second best values are highlighted in underline.
    }
    \label{tab:table1}
\end{threeparttable}
\end{table*}

\subsection{Comparing Methods}
\label{sec:Comparison}
We compare our SDMVC against the following traditional and state-of-the-art methods.

\begin{itemize}

\item \textbf{$K$-means}: the most classic clustering method~\cite{macqueen1967some}.

\item \textbf{SC}: spectral clustering~\cite{ng2002spectral}.

\item \textbf{DEC}: deep embedded clustering~\cite{xie2016unsupervised}.

\item \textbf{IDEC}: improved deep embedded clustering~\cite{guo2017improved}.

\item \textbf{BMVC}: binary multi-view clustering~\cite{zhang2018binary}.

\item \textbf{MVC-LFA}: multi-view clustering via late fusion alignment maximization~\cite{wang2019multi}.

\item \textbf{GMC}: graph-based multi-view clustering~\cite{wang2019gmc}.

\item \textbf{DAMC}: deep adversarial multi-view clustering network~\cite{li2019deep}.

\item \textbf{SAMVC}: self-paced and auto-weighted multi-view clustering~\cite{ren2020self}.

\item \textbf{CDIMC-net}: cognitive deep incomplete multi-view clustering network~\cite{wen2020cdimc}.

\item \textbf{EAMC}: end-to-end adversarial-attention network for multi–modal clustering~\cite{Zhou2020end}.

\item \textbf{SiMVC} and \textbf{CoMVC}: reconsidering representation alignment for multi-view clustering~\cite{trostenMVC}.

\item \textbf{DEMVC}: deep embedded multi-view clustering with collaborative training~\cite{xu2021deep}.
\end{itemize}

$K$-means, SC, DEC,and IDEC are single-view methods. For these single-view methods, the input is the concatenation of all views.
BMVC, MVC-LFA, GMC, DAMC, SAMVC, CDIMC-net, EAMC, SiMVC, CoMVC, and DEMVC belong to multi-view methods.
The shallow models contain $K$-means, SC, BMVC, MVC-LFA, GMC, and SAMVC. The deep models include DEC, IDEC, CDIMC-net, DAMC, EAMC, SiMVC, CoMVC, DEMVC, and our SDMVC.

\subsection{Implementation Details}
\label{sec:Details}
The fully connected (Fc) and convolutional (Conv) neural networks with general settings are both applied to test SDMVC. For BDGP and Caltech101-20, since all views of them are vector data, we use the same fully connected autoencoder (FAE) as \cite{guo2017improved,guo2018deep}. For each view, the encoder is: $\text{Input}-\text{Fc}_{500}-\text{Fc}_{500}-\text{Fc}_{2000}-\text{Fc}_{10}$. For MNIST-USPS and Fashion-MV, we follow \cite{ren2020deep,xu2021deep} and use the same type of convolutional autoencoder (CAE) for each view to learn embedded features. The encoder is: $\text{Input}-\text{Conv}_{32}^{5}-\text{Conv}_{64}^{5}-\text{Conv}_{128}^{3}-\text{Fc}_{10}$. It represents that convolution kernel sizes are 5-5-3 and channels are 32-64-128. The stride is 2. Decoders are symmetric with the encoders of corresponding views. The following settings are the same for all experimental datasets. The dimensionality of all views' embedded features are reduced to 10. ReLU \cite{glorot2011deep} is the activation function and Adam \cite{kingma2014adam} (default learning rate is 0.001) is chosen as the optimizer. The multiple views' autoencoders are pre-trained for 500 epochs. The trade-off coefficient $\gamma$ is set to 0.1. The batch size is 256. Update the target distribution after fine-tuning every 1000 batches. The stop condition is that the Aligned Rate reaches about $90\%$. The input features of all datasets are scaled to $[0, 1]$.

The experiments of SDMVC are conducted on Windows PC with GeForce RTX 2060 GPU (6GB caches) and Intel (R) Core (TM) i5-9400F CPU @ 2.90GHz, 16.0GB RAM.

The open-source codes and corresponding suggested settings of comparing methods are adopted. The hyper-parameters of them are as follows. Specifically, the trade-off coefficient $\gamma$ of IDEC and DEMVC is 0.1. 
For BMVC, $r$ is 5, $\beta$ is 0.003, $\gamma$ is 0.01, $\lambda$ is $10^{-5}$, and the length of code is 128. 
For MVC-LFA, Gaussian kernel is used and $\lambda$ is $2^3$. 
For GMC, $k$ and $\lambda$ are empirically set to 15 and 1.
The SPL controlling parameter $\lambda^{(v)}$ of SAMVC is set to add $15\%$ examples from each view in each iteration.
For CDIMC-net, the graph embedding parameter $\alpha$ is set to 0.0001.
Since the released codes of DAMC and CDIMC-net can not handle the dataset with six views, we extended them to test their performance.
The implementation settings of EAMC, SiMVC, and CoMVC come from https://github.com/DanielTrosten/mvc.

\subsection{Evaluation Measures}\label{sec:metrics}
The used quantitative metrics contain unsupervised clustering accuracy (ACC), normalized mutual information (NMI), and adjusted rand index (ARI).
The reported results are the average values of 10 runs.
Larger values of ACC/NMI/ARI indicate better clustering performance.

\section{Results and Analysis}\label{sec:Algorithm Results}
\begin{table}[!t]
\renewcommand\tabcolsep{3.5pt} 
    \begin{center}
        \newcommand{\minitab}[2][l]{\begin{tabular}{#1}#2\end{tabular}}
        \begin{tabular}{|c|c|c|c|c|}
            \hline
            Datasets & Methods & ACC & NMI & ARI\cr
            \hline
            \hline
            \multirow{5}{*}{\minitab[c]{MNIST-USPS \\ 2 views, $K=10$ \\ 5,000 examples}}
            & IDEC (view 1) & \textbf{0.8246}  & \textbf{0.7963} & \textbf{0.7292} \\
            & IDEC (view 2) & 0.5616 & 0.6308 & 0.4534 \\
            & SDMVC (view 1) & \textbf{0.9888}  & \textbf{0.9696} & \textbf{0.9752} \\
            & SDMVC (view 2) & 0.9978  & 0.9933 & 0.9951 \\
            & SDMVC & 0.9982   & 0.9947  & 0.9960 \\
            
            \hline
            \hline
            \multirow{7}{*}{\minitab[c]{Fashion-MV \\ 3 views, $K=10$ \\ 10,000 examples}}
            & IDEC (view 1) & 0.4918  & 0.5780 & 0.3977 \\
            & IDEC (view 2) & 0.4905 & 0.5961 & 0.4010 \\
            & IDEC (view 3) & \textbf{0.5117} & \textbf{0.6042} & \textbf{0.4103} \\
            & SDMVC (view 1) & 0.8438  & 0.8831 & 0.8049 \\
            & SDMVC (view 2) & 0.8465  & 0.8935 & 0.8151 \\
            & SDMVC (view 3) & \textbf{0.8477}  & \textbf{0.8865} & \textbf{0.8090} \\
            & SDMVC & 0.8626   & 0.9215  & 0.8405 \\
            
            \hline
            \hline
            \multirow{5}{*}{\minitab[c]{BDGP \\ 2 views, $K=5$ \\ 2,500 examples}}
            & IDEC (view 1) & 0.4628  & 0.2996 & 0.2492 \\
            & IDEC (view 2) & \textbf{0.9564} & \textbf{0.8867} & \textbf{0.8939} \\
            & SDMVC (view 1) & 0.9852  & 0.9535 & 0.9637 \\
            & SDMVC (view 2) & \textbf{0.9752}  & \textbf{0.9327} & \textbf{0.9393} \\
            & SDMVC & 0.9835   & 0.9466  & 0.9551 \\
            
            \hline
            \hline
            \multirow{12}{*}{\minitab[c]{Caltech101-20 \\ 6 views, $K=20$ \\ 2,386 examples}}
            & IDEC (view 1) & 0.2825  & 0.3338 & 0.1370 \\
            & IDEC (view 2) & 0.3282 & 0.4111 & 0.2364 \\
            & IDEC (view 3) & 0.3127  & 0.3592 & 0.1678 \\
            & IDEC (view 4) & \textbf{0.4531} & \textbf{0.6352} & \textbf{0.3872} \\
            & IDEC (view 5) & 0.3919  & 0.5830 & 0.3350 \\
            & IDEC (view 6) & 0.3822 & 0.5217 & 0.3159 \\
            & SDMVC (view 1) & 0.7154  & 0.7171 & 0.7269 \\
            & SDMVC (view 2) & 0.7137  & 0.7110 & 0.7236 \\
            & SDMVC (view 3) & 0.7158  & 0.7174 & 0.7273 \\
            & SDMVC (view 4) & \textbf{0.7167}  & \textbf{0.7216} & \textbf{0.7271} \\
            & SDMVC (view 5) & 0.7154  & 0.7180 & 0.7282 \\
            & SDMVC (view 6) & 0.7146  & 0.7183 & 0.7284 \\
            & SDMVC & 0.7158   & 0.7176  & 0.7265 \\

            \hline
        \end{tabular}
    \end{center}
    \caption{Improvements compared to individual views. We compare SDMVC with a single-view baseline (IDEC) in each view.}
    \label{tab:table2}
\end{table}

\subsection{Results on Real Data}\label{sec:results}
Our proposed framework is applicable to both convolutional and fully connected autoencoders. Therefore, image data is fed into convolutional autoencoders and vector data is fed into fully connected autoencoders. However, as some algorithms can not handle raw image data, the data is reshaped into vectors as input. The quantitative comparison is shown in Table \ref{tab:table1}. One can observe that: (1) Our SDMVC achieves the best performance on the quantitative metrics across all datasets. Meanwhile, we find the improvements are significant when $t$-test with $5\%$ significance level is used to evaluate the statistical significance. 
Especially on Caltech101-20, SDMVC improves existing methods by a large margin.
Although some methods have achieved comparable results on some datasets, they are not robust in terms of multiple datasets.
For example, DAMC obtains excellent clustering performance on BDGP but it does not perform well on other datasets.
SiMVC and CoMVC achieve good performance on MNIST-USPS and Fashion-MV, while showing poor performance on BDGP and Caltech101-20.
(2) In general, the clustering performance of single-view methods (i.e., $K$-means, SC, DEC, and IDEC) is worse than that of multi-view methods. However, the performance of many comparison MVC methods is also limited on BDGP and Caltech101-20.
Especially on BDGP, the performance of some MVC methods (e.g., EAMC and SiMVC) is lower than that of single-view methods (i.e., DEC and IDEC).
The reason is that the discriminability of multiple views have large gaps and certain views' clustering structures are highly unclear.
For example, Table \ref{tab:table2} shows the clustering performance of individual views evaluated by single-view method IDEC, where we could observe that the ACC values of the worst views among BDGP and Caltech101-20 are 0.4628 and 0.2825, respectively.
These views with unclear clustering structures cause negative effects in some multi-view clustering methods, leading to the unsatisfying clustering performances.
Yet even that, SDMVC has achieved the state-of-the-art clustering performance. The reason is that, firstly, each view's training is independent and the target distribution is generated from a global perspective. In our method, the negative effects caused by the low discriminative views could be overcomed, which is accordance with the analysis in our Eqs. (\ref{7}), (\ref{8}), and (\ref{9}).
Secondly, the consistent clustering for multiple views is achieved by the cluster assignments instead of the learned features.
In this way, the interference from multiple views can be alleviated and the diversity of their features is preserved, and thus more complementary information can be mined to boost clustering performance.

\begin{figure*}[!t]
\centering
\subfigure[View 1-0 (NMI = 0.296)]{\includegraphics[height=1.40in, width=1.6in]{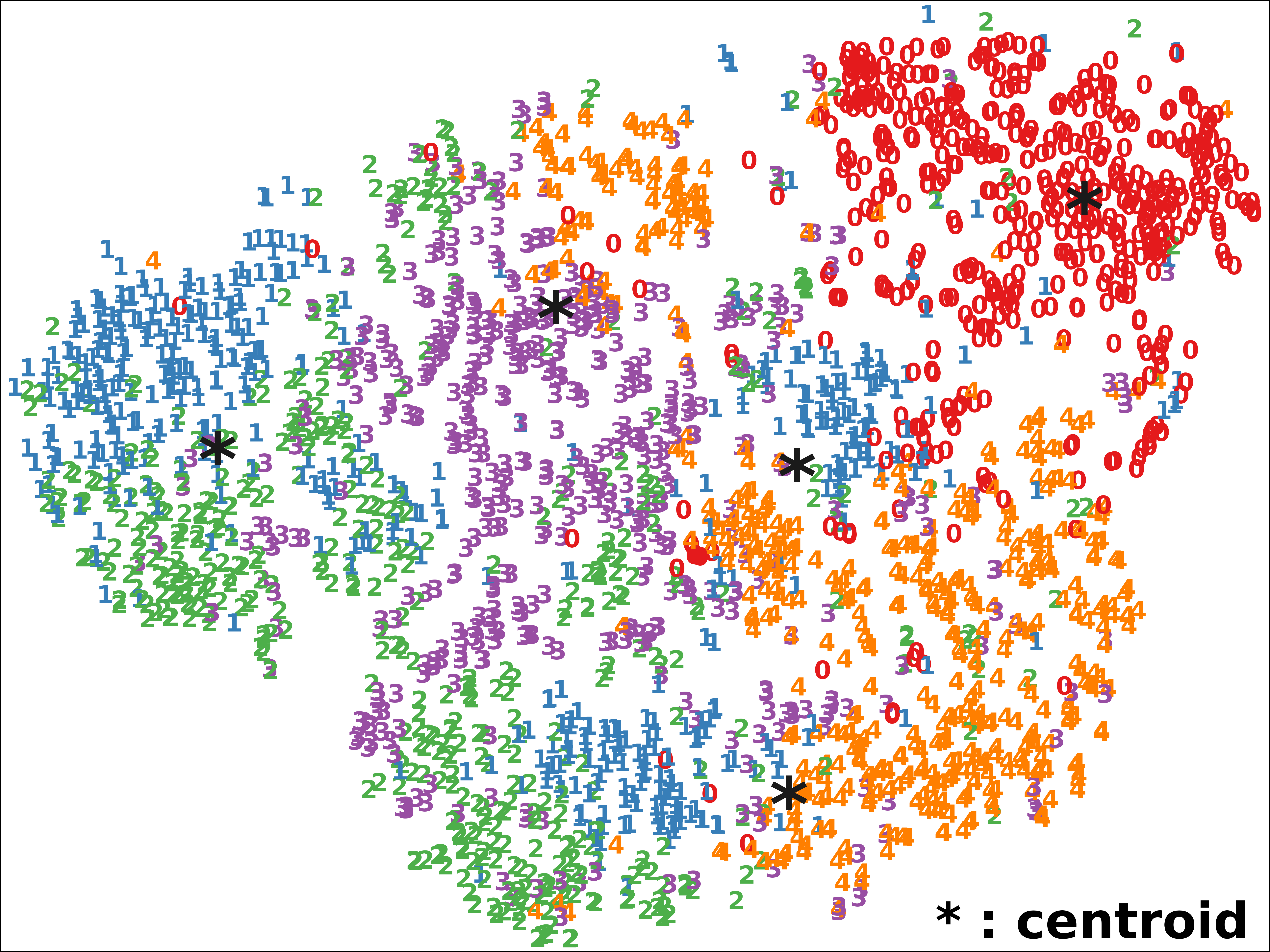}}
\subfigure[View 1-1000 (NMI = 0.781)]{\includegraphics[height=1.40in, width=1.6in]{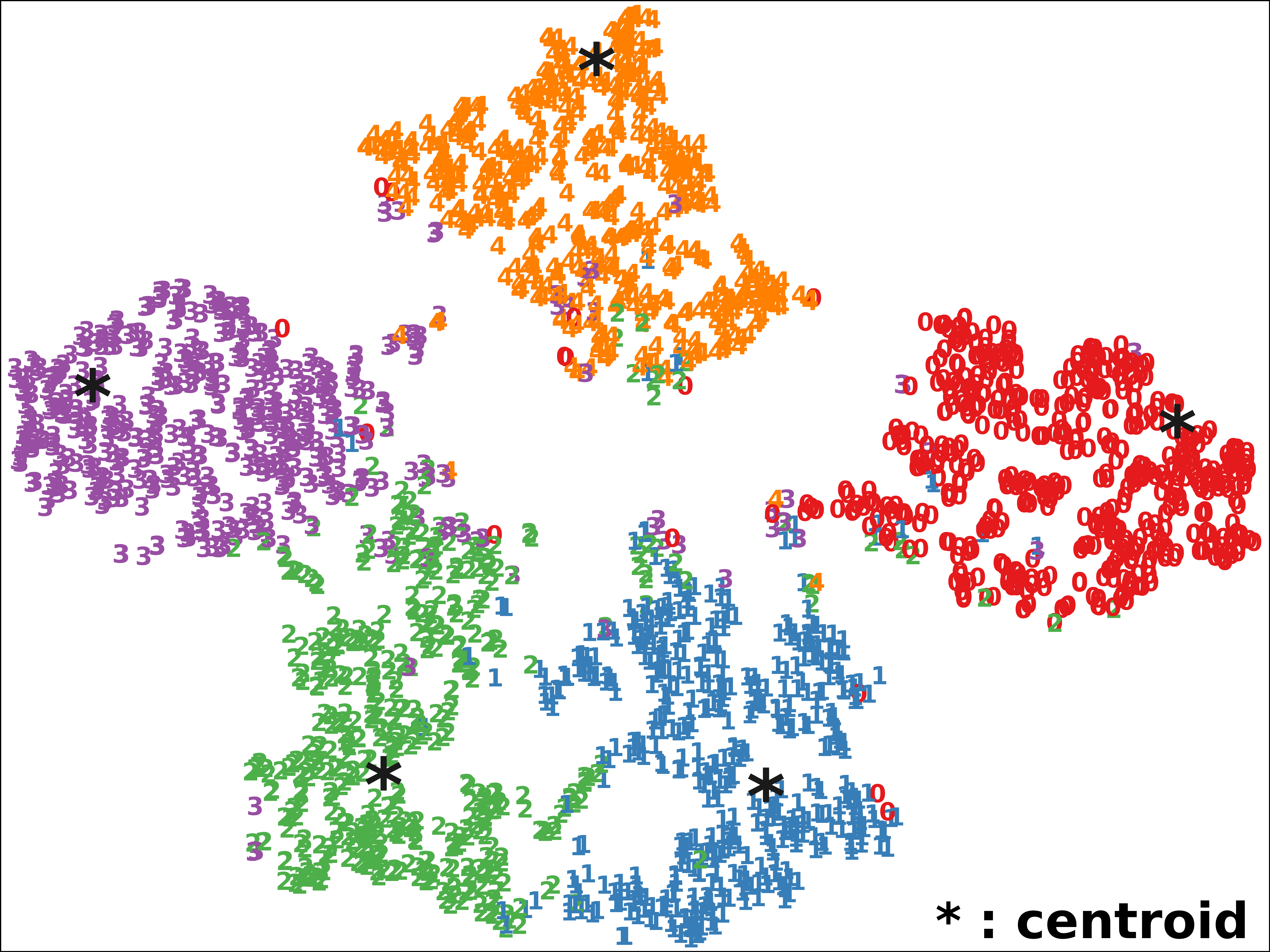}}
\subfigure[View 1-2000 (NMI = 0.914)]{\includegraphics[height=1.40in, width=1.6in]{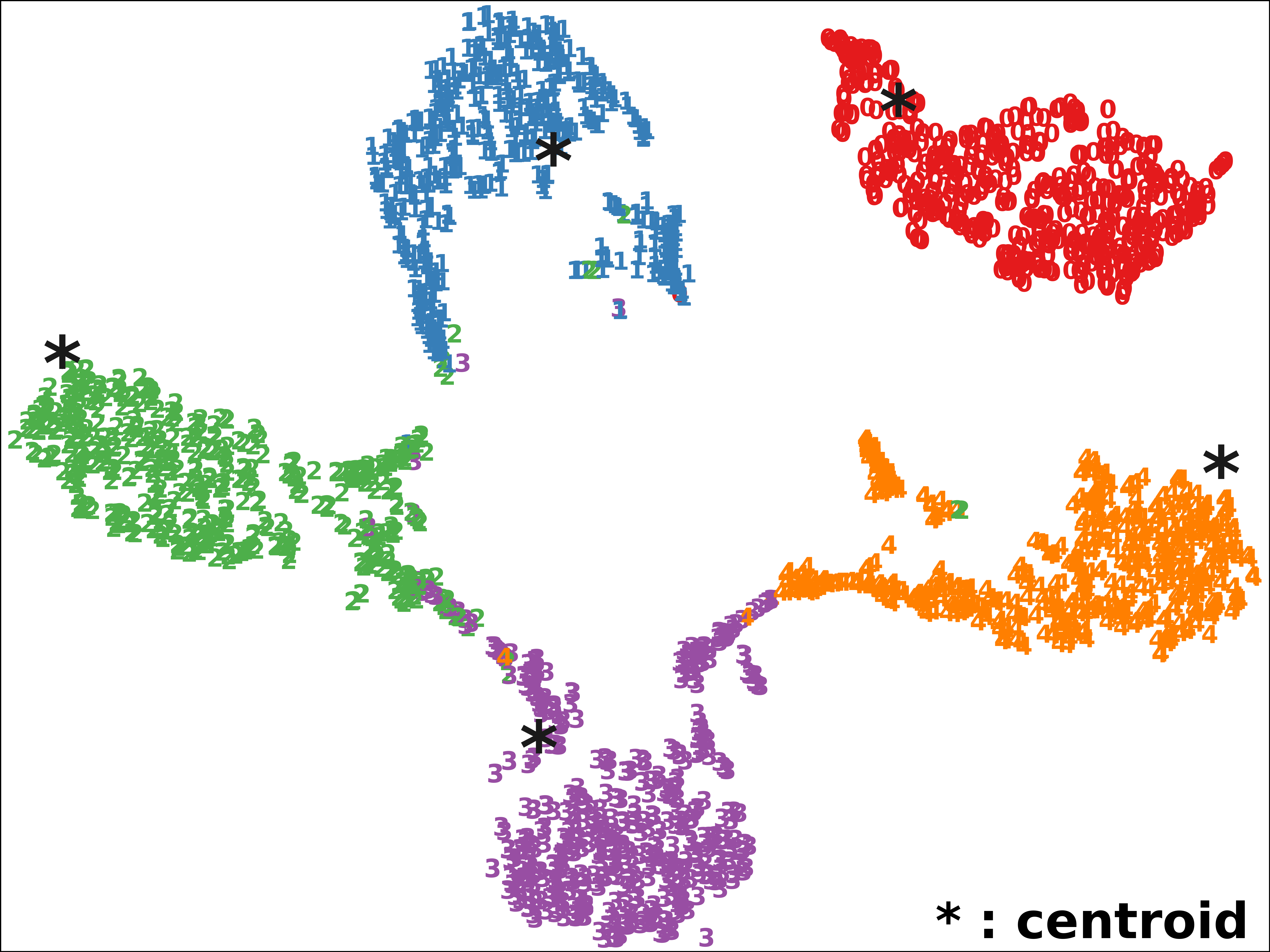}}
\subfigure[View 1-3000 (NMI = 0.946)]{\includegraphics[height=1.40in, width=1.6in]{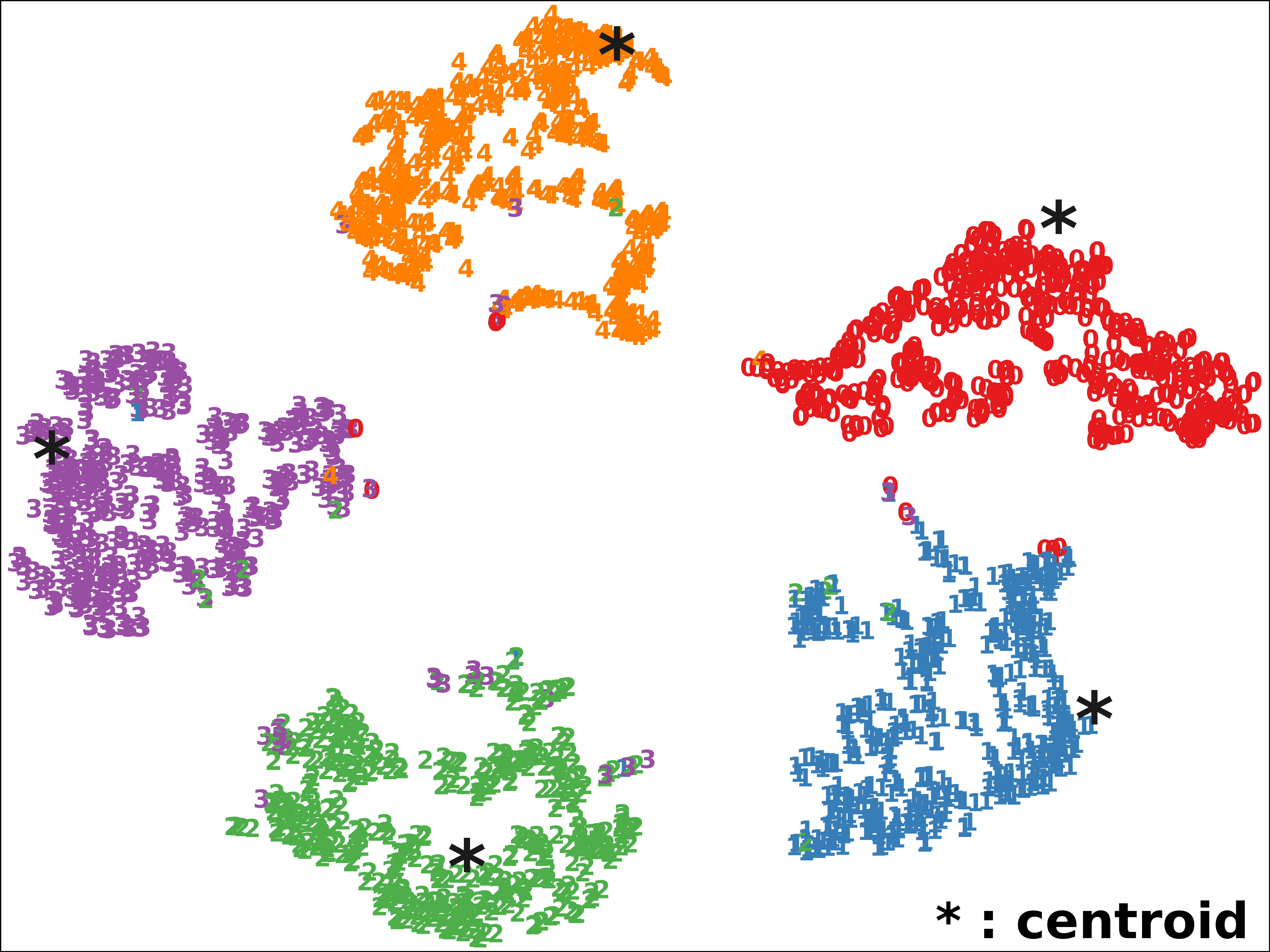}}
\vfill
\subfigure[View 2-0 (NMI = 0.699)]{\includegraphics[height=1.40in, width=1.6in]{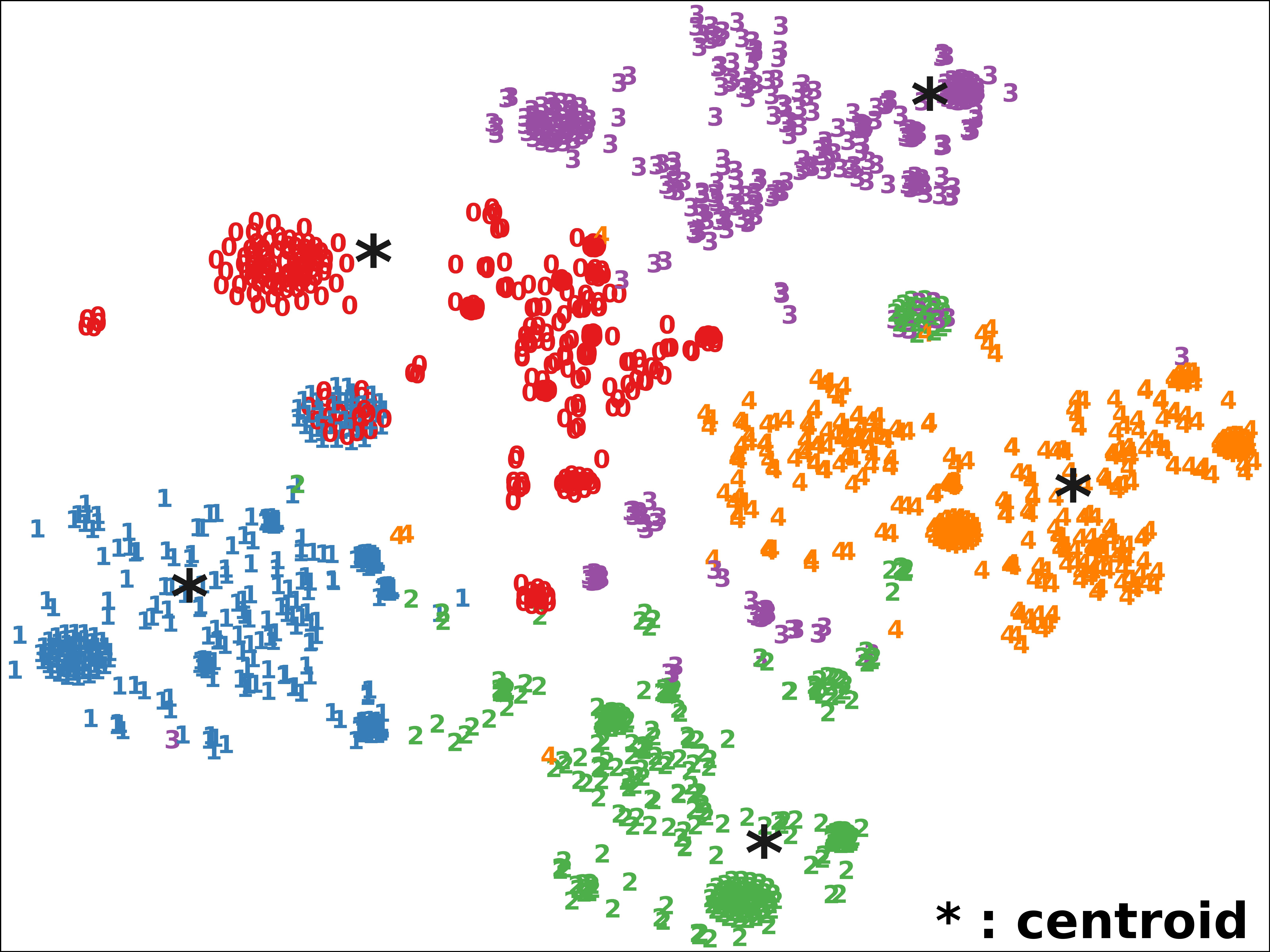}}
\subfigure[View 2-1000 (NMI = 0.892)]{\includegraphics[height=1.40in, width=1.6in]{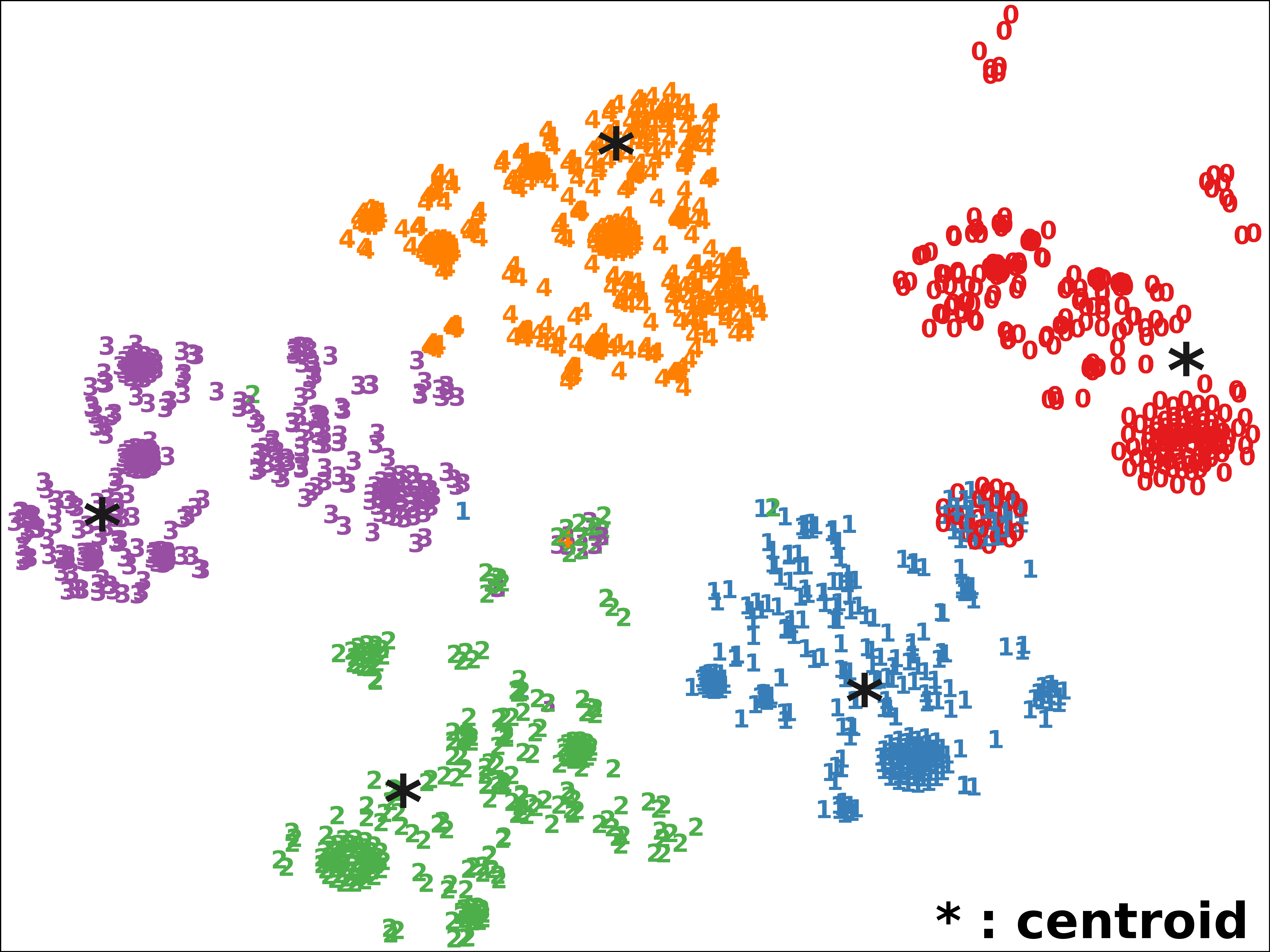}}
\subfigure[View 2-2000 (NMI = 0.923)]{\includegraphics[height=1.40in, width=1.6in]{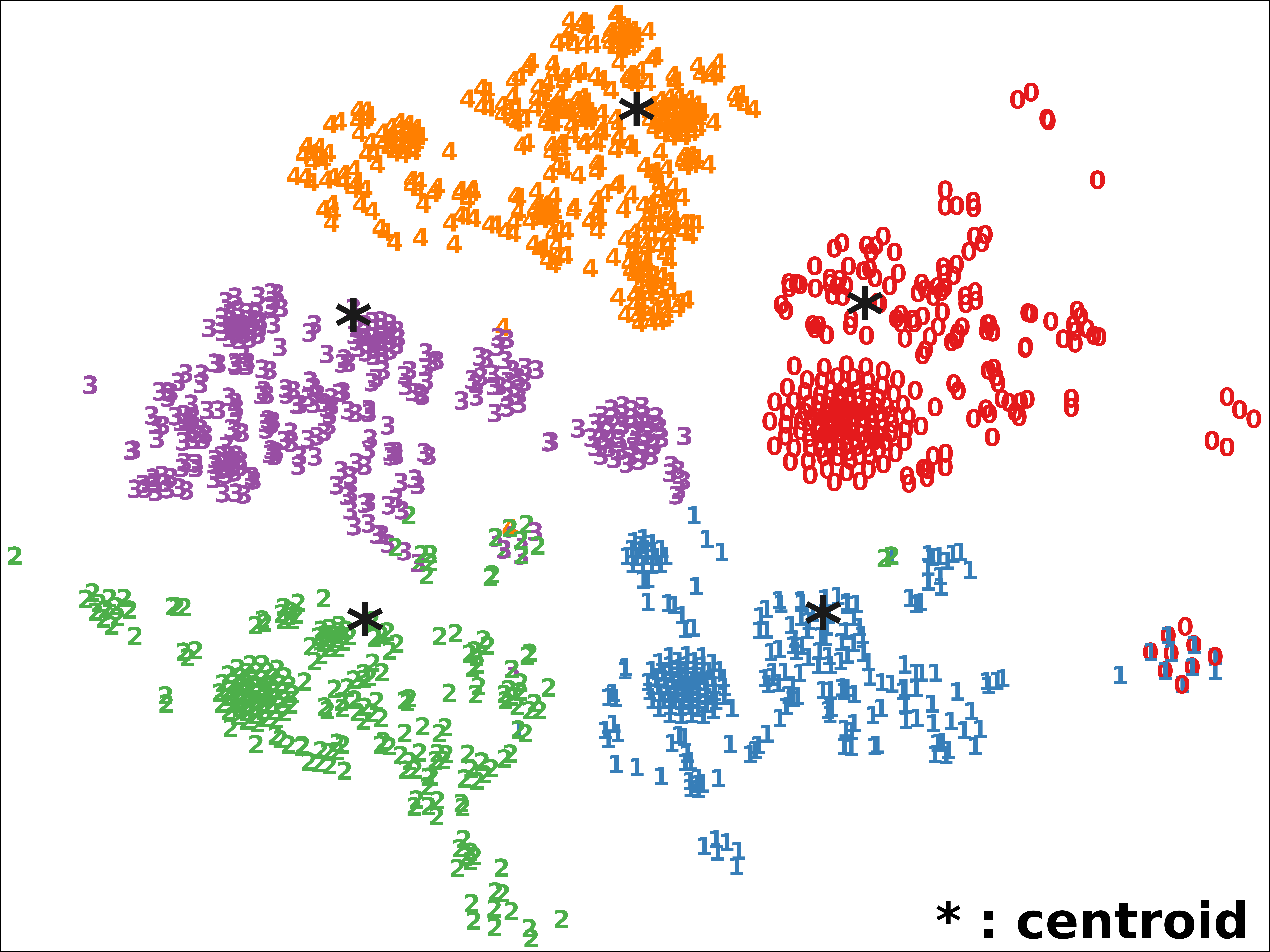}}
\subfigure[View 2-3000 (NMI = 0.928)]{\includegraphics[height=1.40in, width=1.6in]{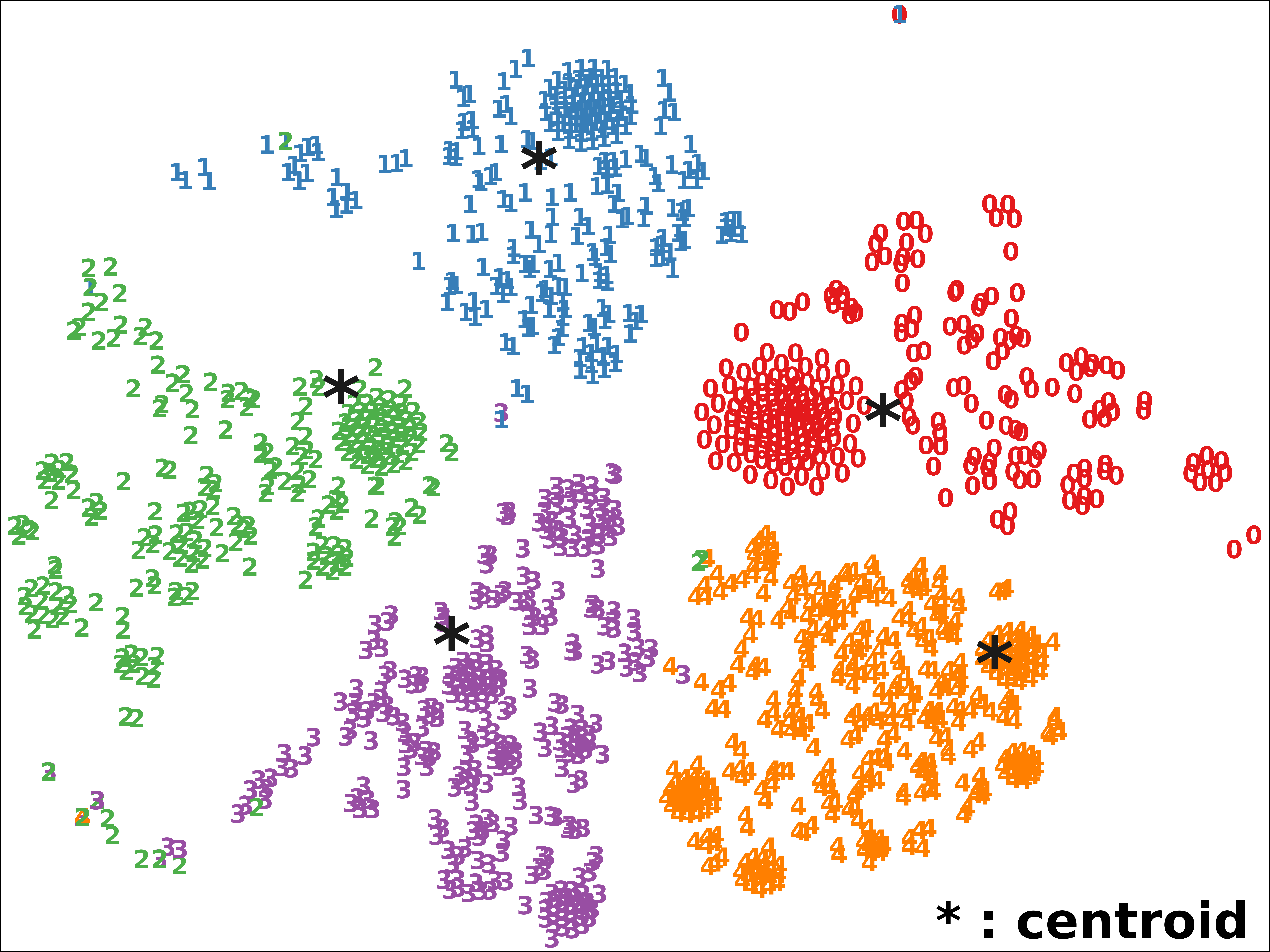}}
\vfill
\subfigure[Global-0 (NMI = 0.774)]{\includegraphics[height=1.40in, width=1.6in]{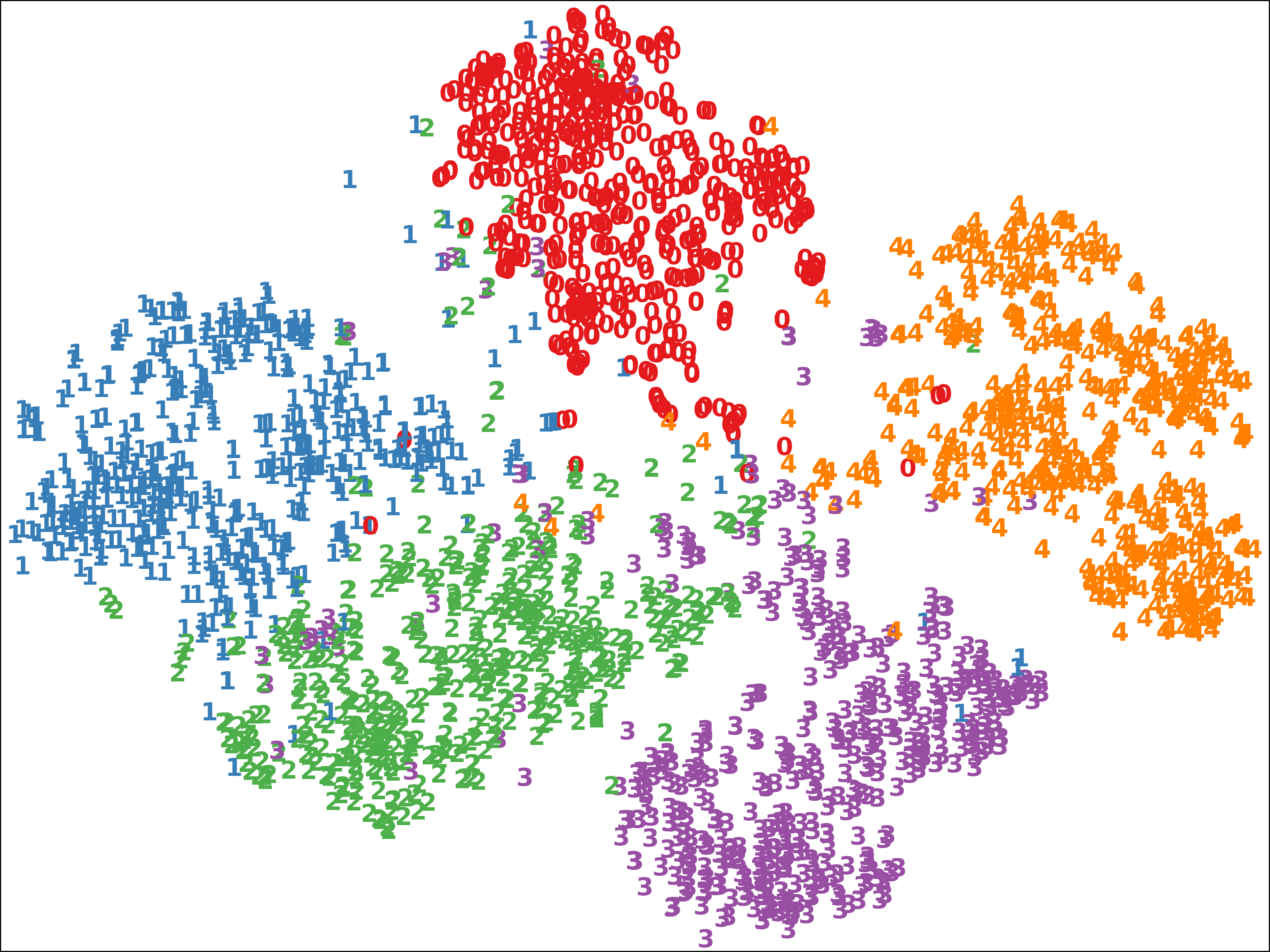}}
\subfigure[Global-1000 (NMI = 0.919)]{\includegraphics[height=1.40in, width=1.6in]{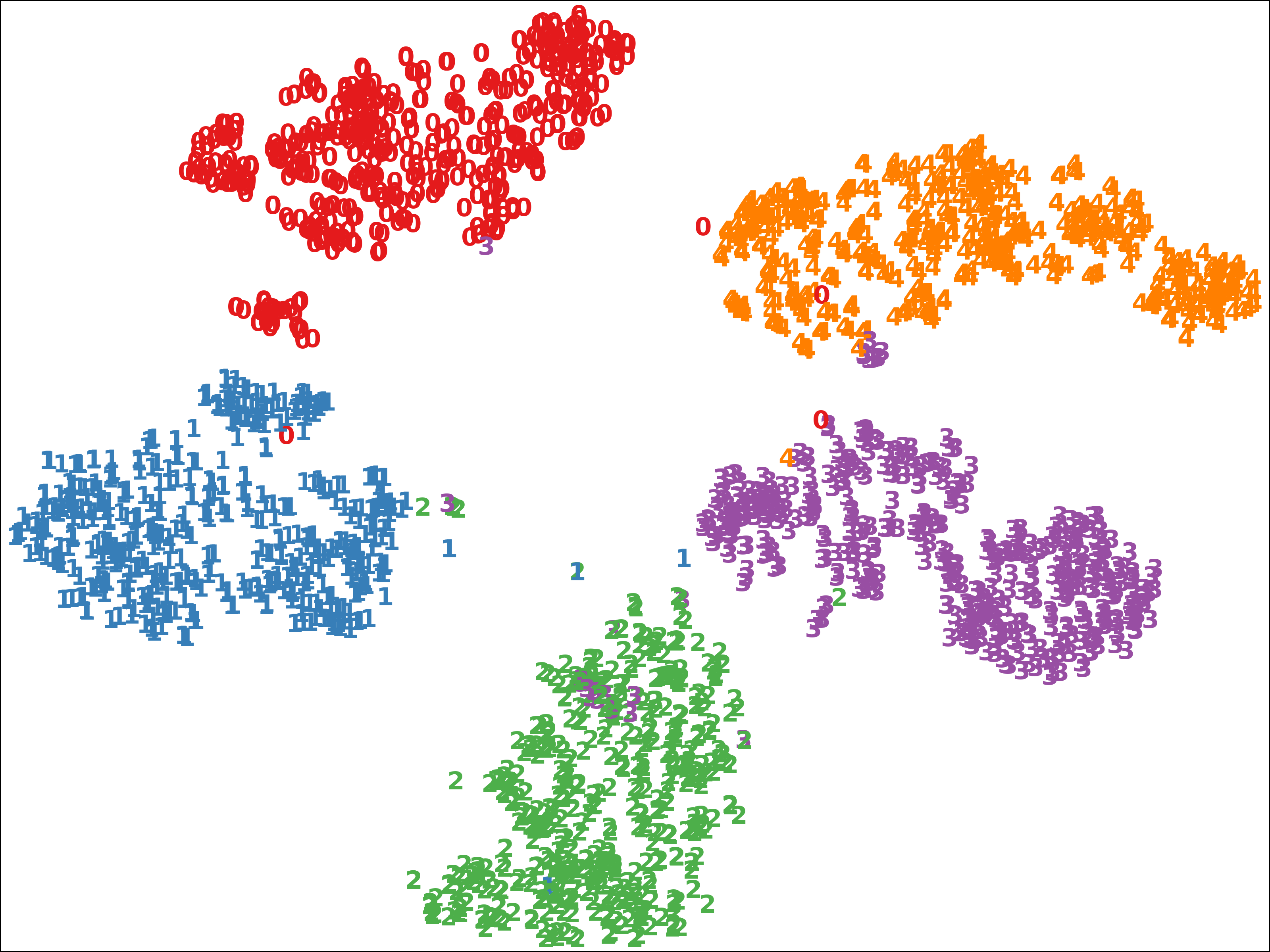}}
\subfigure[Global-2000 (NMI = 0.940)]{\includegraphics[height=1.40in, width=1.6in]{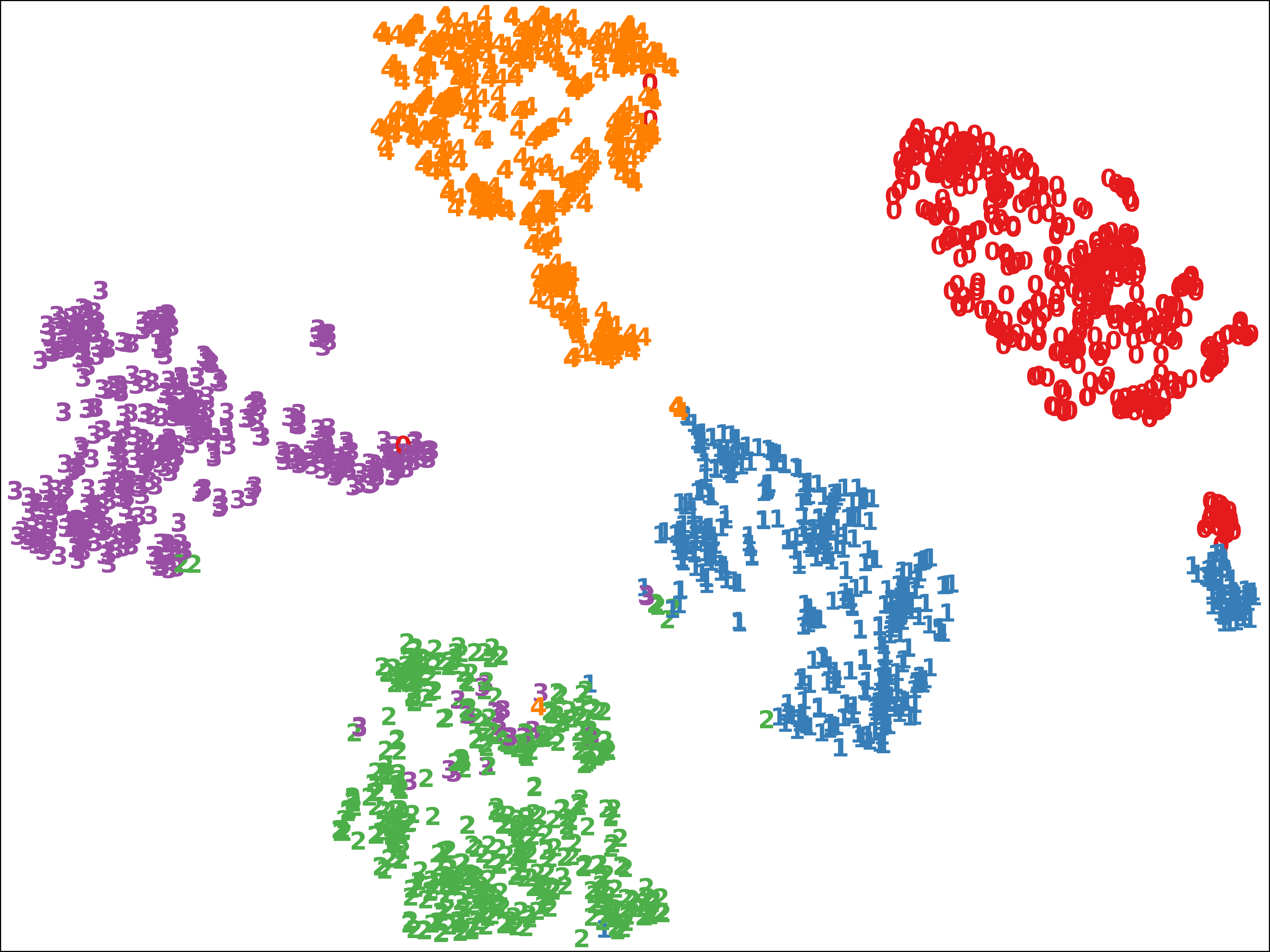}}
\subfigure[Global-3000 (NMI = 0.947)]{\includegraphics[height=1.40in, width=1.6in]{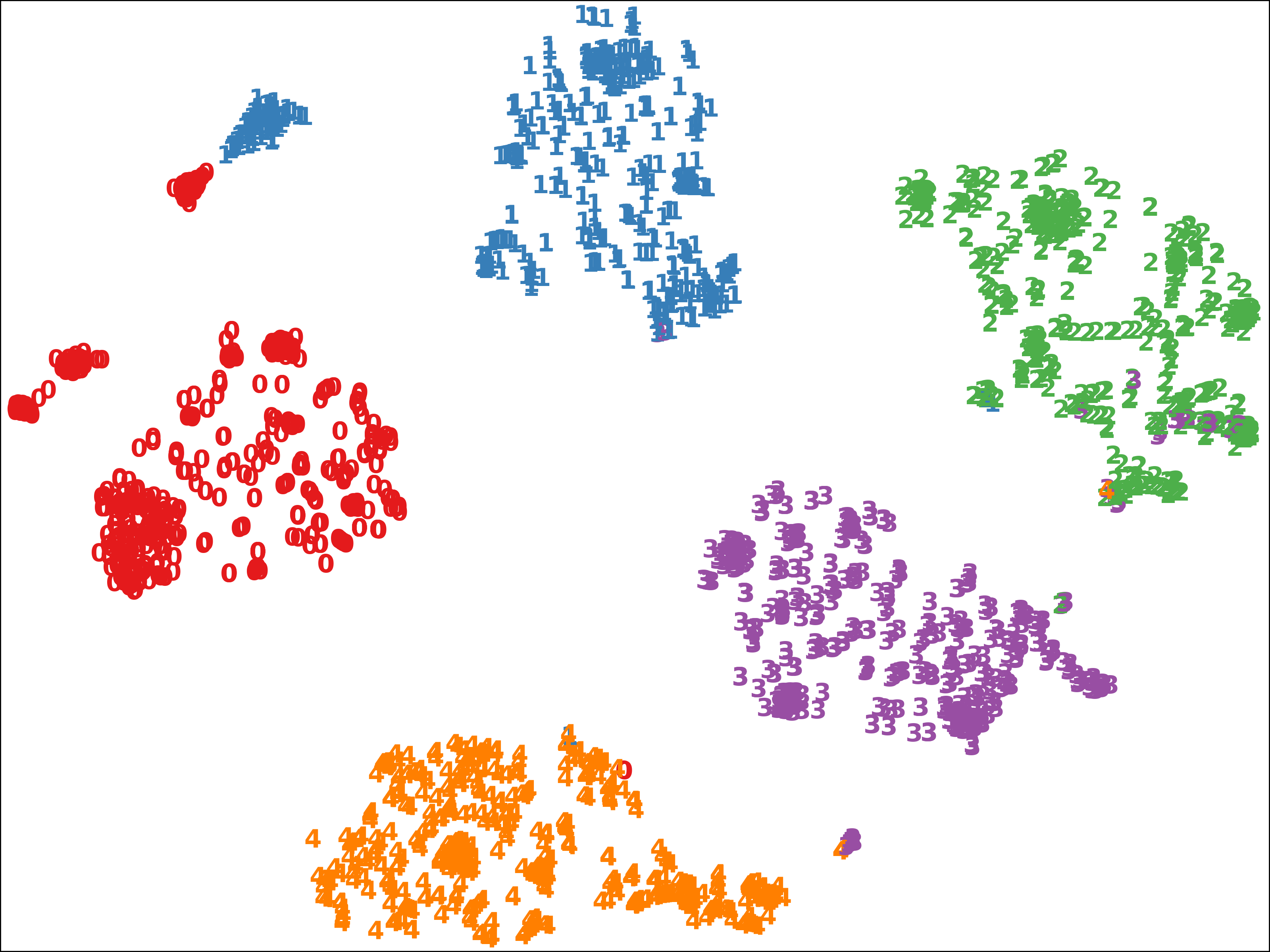}}
% \vspace{-0.02cm}
\caption{Visualization of embedded features and global features on BDGP. View 1 is visual features (a-d), View 2 is textual features (e-h), and Global denotes global features (i-l).
(a) and (e) show the embedded features when pre-training is finished.
The numbers represent batches that have been trained in the subsequent discriminative feature learning.}
\label{fig:separation}
\end{figure*}

\begin{figure}[!t]
\centering
\subfigure{\includegraphics[height=2.1in]{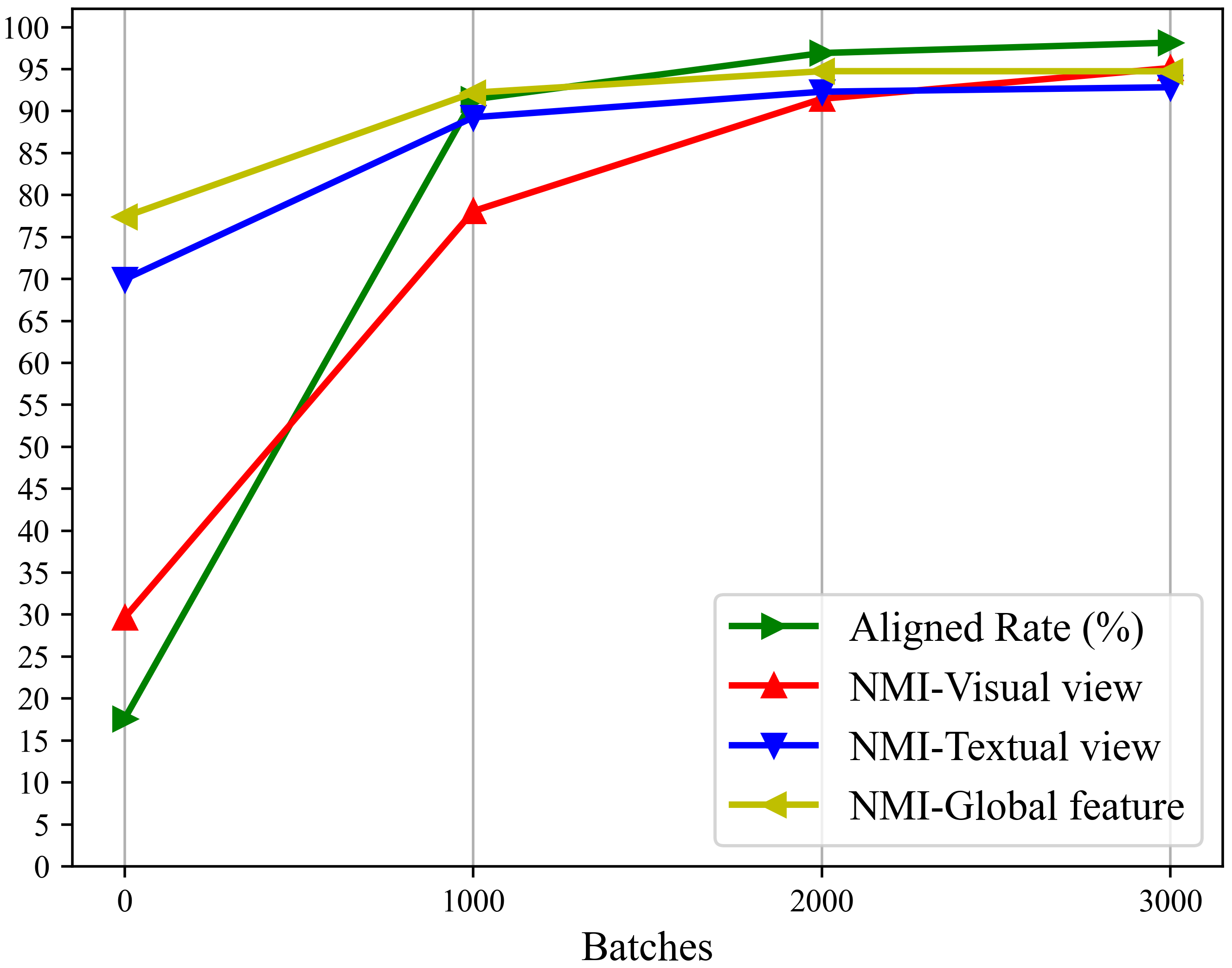}}
% \vspace{-0.01cm}
\caption{Aligned Rate and NMI during multi-view discriminative feature learning.}
\label{fig:Aligned}
\end{figure}

\subsection{Ablation Study}\label{sec:Ablation}

\subsubsection{Improvements Compared to Individual Views}
We test the clustering performance of IDEC on each independent view as shown in Table \ref{tab:table2}.
There is a considerable gap of performance between the best view of IDEC and the worst view of IDEC, where the low clustering performance indicates their clustering structures are unclear.
Without label information, we are not sure which view's prediction is better. The best view of IDEC and the corresponding view of SDMVC are highlighted in boldface. We find that SDMVC improves clustering performance by about 15$\%$ on MNIST-USPS, 30$\%$ on Fashion-MV, 3$\%$ on BDGP, and 30$\%$ on Caltech101-20. Eventually, the clustering performance of SDMVC on each view (even the views with the worst clustering performance) is much better than that of IDEC. This indicates that: (1) Our method have overcome the negative effects caused by the low discriminative views with unclear clustering structures. (2) The complementary information hidden in multiple views for each other can be explored to improve clustering performance by our proposed discriminative feature learning framework.

\subsubsection{Multi-View Discriminative Feature Learning}
In this part, we investigate how does our method work.
As shown in Fig. \ref{fig:separation}, we visualize the learning process of individual views' embedded features and global features on BDGP via $t$-SNE~\cite{maaten2008visualizing}.
Individual views' centroids of all clusters are also plotted, which belong to the learnable parameters in the clustering layers, i.e., $\{\{\pmb{\mu}_j^v\}{_{j=1}^K}\}{_{v=1}^V}$.
For example, when the pre-training of autoencoders is finished, two views' embedded features of BDGP are shown in Figs. \ref{fig:separation}(a) and (e), respectively.
The discriminative degree of features is low and the cluster centroids can not reflect the true clustering structures, corresponding to the low NMI values and low Aligned Rate as shown in Fig. \ref{fig:Aligned} (when \#Batches = 0).
By constructing the global features as shown in Fig. \ref{fig:separation}(i), SDMVC is able to explore the global discriminative information contained in the two views and thus the clustering performance of the global features is improved.
Subsequently, SDMVC builds a target distribution to learn more discriminative features as shown in Figs. \ref{fig:separation}(b) and (f), as well as discover the discriminative cluster patterns, which in turn are utilized to update the target distribution.
As a result, in the following feature learning process, the clustering structures of embedded features become clearer and clearer while their centroids are gradually separated, corresponding to the increasing NMI values and Aligned Rates.

The above observations verify the mechanism of SDMVC to boost clustering performance, i.e., by performing the proposed multi-view discriminative feature learning, SDMVC can globally utilize the discriminative and complementary information while overcoming the negative impact on clustering caused by some views with unclear clustering structures.

\begin{table*}[!t]
\centering
\renewcommand\tabcolsep{7.0pt}
\begin{threeparttable}
    \begin{tabular}{r|ccc|ccc|ccc|ccc}
    \toprule
    &\multicolumn{3}{c|}{MNIST-USPS} &\multicolumn{3}{c|}{Fashion-MV} &\multicolumn{3}{c|}{BDGP} &\multicolumn{3}{c}{Caltech101-20}\cr
    \hline
    Variants & ACC & NMI & ARI & ACC & NMI & ARI & ACC & NMI & ARI & ACC & NMI & ARI\\
    \hline
    SDMVC (w/o UTD) & 0.4742 & 0.5295 & 0.3407 & 0.4348 & 0.4587 & 0.3145 & 0.4624 & 0.2984 & 0.2483 & 0.2552 & 0.2912 & 0.1163 \\
    SDMVC (w/o SSM) & 0.8542 & 0.8873 & 0.8034 & 0.7163 & 0.7672 & 0.6831 & 0.9391 & 0.9051 & 0.9068 & 0.4639 & 0.6547 & 0.3825 \\
    SDMVC           & 0.9982 & 0.9947 & 0.9960 & 0.8626 & 0.9215 & 0.8405 & 0.9835 & 0.9466 & 0.9551 & 0.7158 & 0.7176 & 0.7265 \\
    \bottomrule
    \end{tabular}
    \caption{Ablation studies on the Unified Target Distribution (UTD) and the Self-Supervised Manner (SSM).
    }
    \label{tab:table3}
\end{threeparttable}
\end{table*}

\subsubsection{Unified Target Distribution (UTD) and Self-Supervised Manner (SSM)}

Table \ref{tab:table3} lists the results on two variants of SDMVC.
We could have the observations as follows:
(1) ``SDMVC (w/o UTD)'' does not yield satisfied results, which performs consistent clustering on multiple views' predictions by Eq. (\ref{eq:s}) without the proposed unified target distribution (UTD). Instead, the unified target distribution is used in SDMVC to learn consistent predictions. By averaging multiple views' predictions, SDMVC obtains the definite prediction, whose clustering performance is comparable and even better than that of the best view of SDMVC.
(2) The improvement of ``SDMVC (w/o SSM)'' over IDEC is also limited, which performs the feature learning without the proposed view-independent self-supervised manner (SSM). In SDMVC, however, the learning of each view’s discriminative features and cluster centroids is independent for other views, and the concatenated low-dimensional embedded features are only used to obtain the pseudo-label information. This framework preserves the diversity of multiple views while mining their complementary information, and thus greatly improves all views' clustering performance. Accordingly, the two variants of SDMVC validate that its different parts have necessary contributions.

\subsection{Model Analysis}
\begin{figure}[!t]
\centering
\subfigure{\includegraphics[height=2.1in]{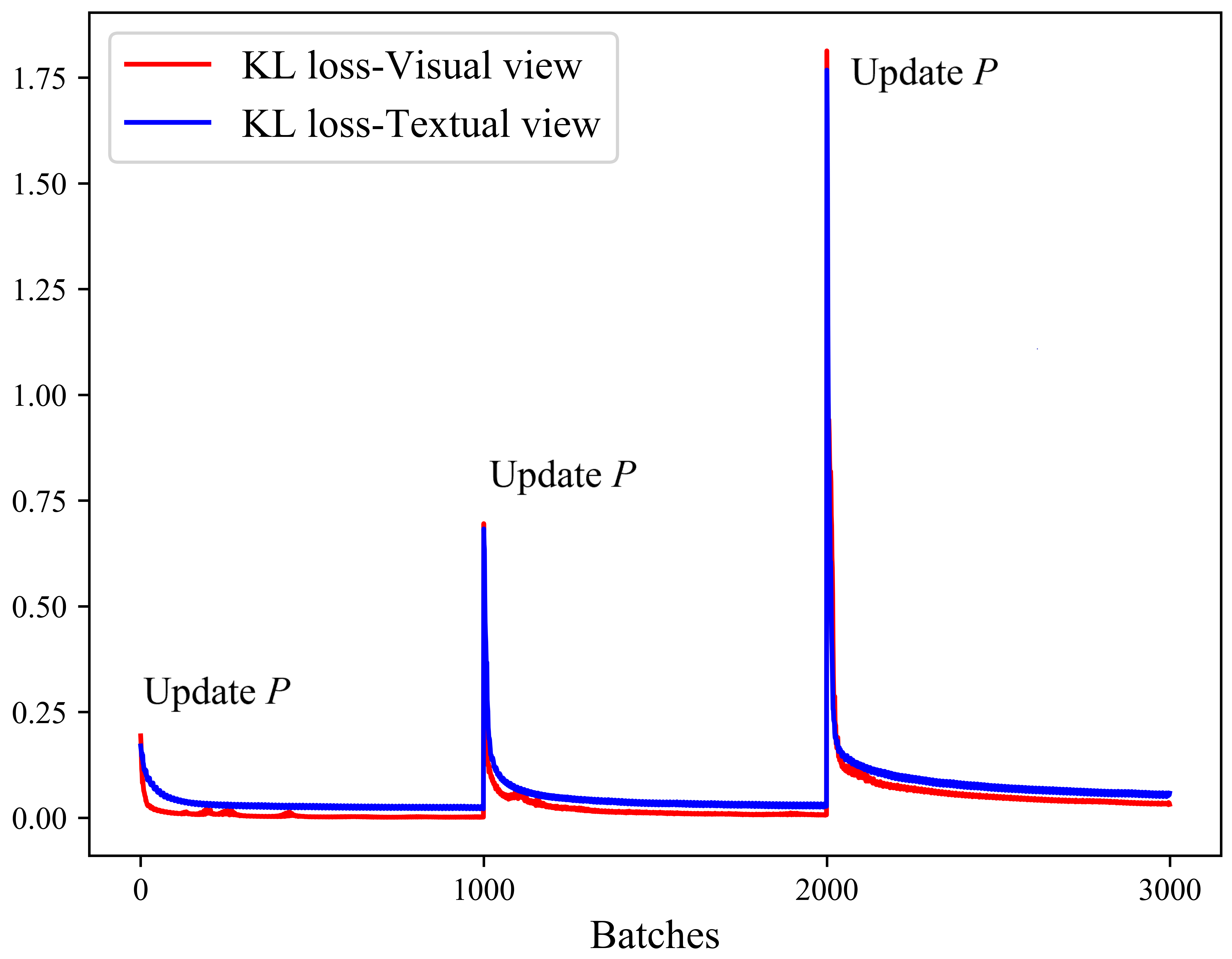}}
% \vspace{-0.01cm}
\caption{Clustering (KL divergence) loss.}
\label{fig:Convergence}
\end{figure}

\begin{figure}[!t]
\centering
\subfigure{\includegraphics[height=2.1in]{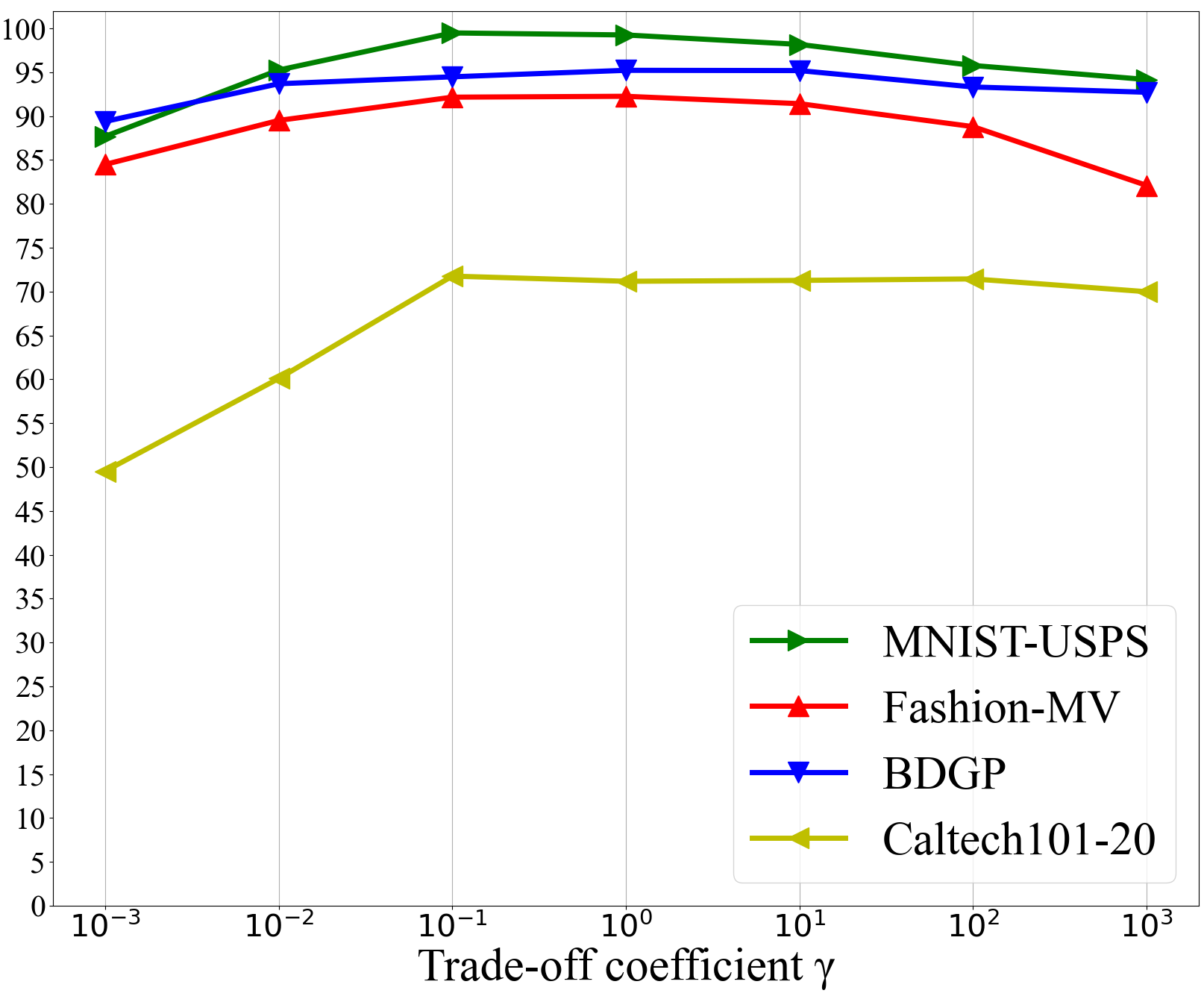}}
% \vspace{-0.01cm}
\caption{$\gamma$ $vs.$ NMI.}
\label{fig:parameters}
\end{figure}

\begin{figure}[!t]
\centering
\subfigure{\includegraphics[height=2.1in]{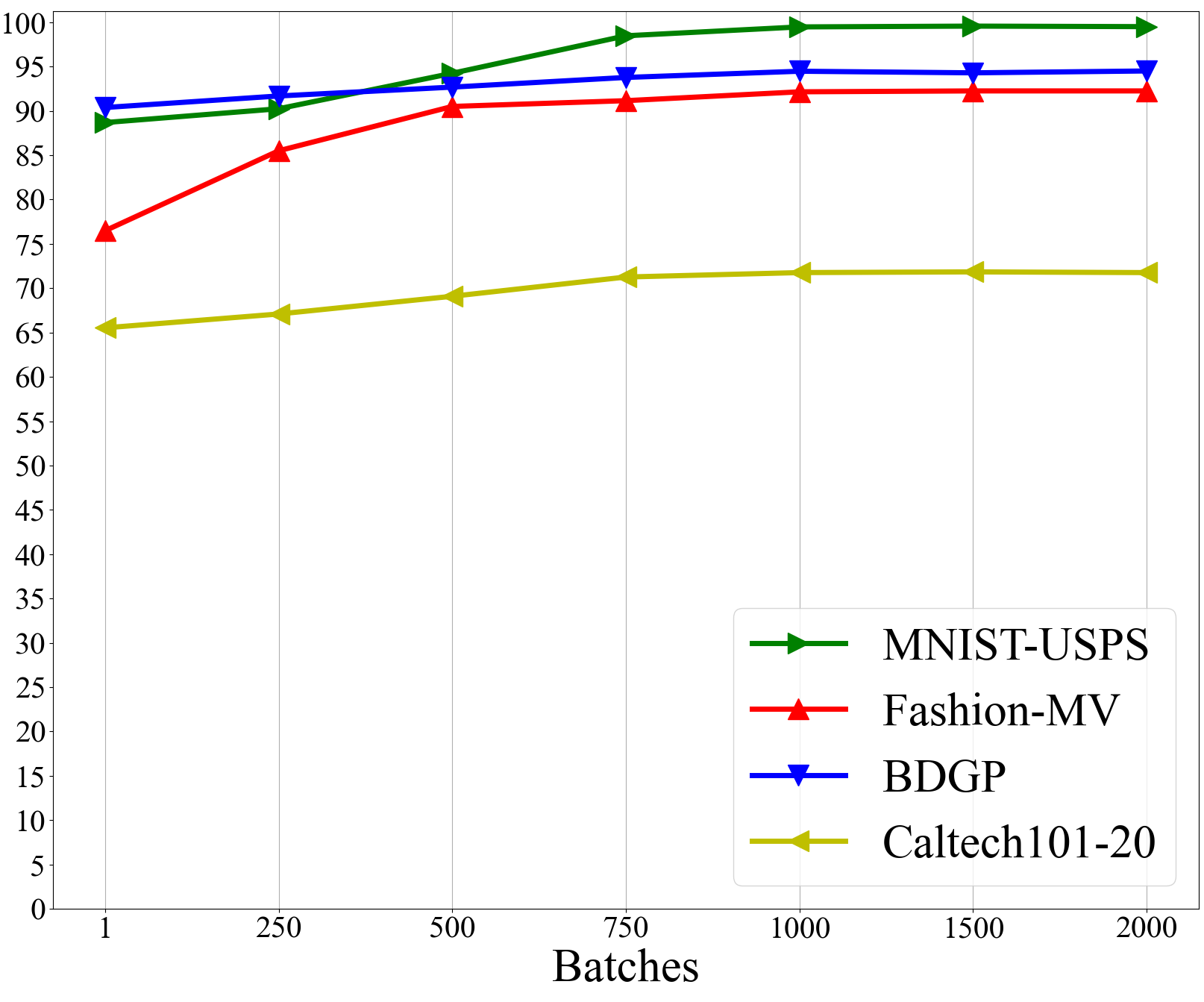}}
% \vspace{-0.01cm}
\caption{Batches $vs.$ NMI.}
\label{fig:Batches}
\end{figure}

\subsubsection{Convergence Analysis}
In this part, we conduct the model convergence analysis. In Fig. \ref{fig:Aligned}, we could find that Aligned Rate and clustering performance are positively correlated. In Fig. \ref{fig:Convergence}, each time the target distribution $P$ is updated, the newly generated and enhanced pseudo soft assignments have stronger discriminability. So, the clustering loss (i.e., KL divergence) values suddenly increase. Then, the soft cluster assignments of all views are trained to be consistent due to the unified target distribution, which improves the Aligned Rate of the multiple views' predictions.

It is not difficult to find that both the objectives of Eq. (\ref{eq:Lr}) and Eq. (\ref{eq:Lc}) are convex functions. Consequently, our algorithm has good convergence property after each update of the target distribution as shown in Fig. \ref{fig:Convergence}.
Since the Aligned Rate of our method is calculated in an unsupervised manner, we advise stopping training by taking a high value of Aligned Rate, such as $90\%$, to guarantee the multi-view clustering consistency.

\subsubsection{Hyper-parameter Analysis}
The setting of the trade-off coefficient $\gamma$ follows a common trade-off strategy between clustering and reconstruction.
Concretely, the clustering loss will be dominant when $\gamma$ is too large, and thus the feature diversity of multiple views may be degenerative. When $\gamma$ is too small, however, the clustering loss is unable to make the model learn meaningful clusters.
Fig. \ref{fig:parameters} reports the clustering results on the four datasets with varied trade-off coefficients (i.e., $\gamma$ varies from $10^{-3}$ to $10^{3}$). For SDMVC, the trade-off coefficient between the clustering loss and the reconstruction loss is robust in [$10^{-1}$, $10^{1}$]. Without loss of generality, we let $\gamma=0.1$ for all datasets in the experiments.

Additionally, we investigate the appropriate training batches to update the unified target distribution $P$. As shown in Fig. \ref{fig:Batches}, the performance is poor when the batches setting is too small. This is because the frequent update of target distribution $P$ make the model unable to utilize well the feature complementarity hidden in $P$. SDMVC has steady performance when the batches setting is large. For all datasets, accordingly, the target distribution is updated after fine-tuning every 1000 batches.

\section{Conclusion}
\label{sec:Conclusions}
In this paper, we have proposed a novel self-supervised discriminative feature learning framework for deep multi-view clustering (SDMVC).
Different from existing MVC methods, SDMVC can overcome the negative impact on clustering performance caused by certain views with unclear clustering structures.
In a self-supervised manner, it utilizes the discriminative information from a global perspective to establish a unified target distribution, which is used to learn more discriminative features and consistent predictions of multiple views.
We theoretically analyzed the generalization bound of our method.
Experiments on different types of multi-view datasets demonstrate that the proposed method has superior performance over the recent state-of-the-art multi-view clustering methods.
In addition, SDMVC has the complexity that is linear to the data size and thus its feature learning paradigm has the potential for other applications, such as semi-supervised clustering and classification.
Notably, we found that the stability of autoencoders' unsupervised pre-training might be a bottleneck for autoencoder-based multi-view clustering methods.
Therefore, our future work will focus on the robustness of deep multi-view feature learning and clustering.

% if have a single appendix:
%\appendix[Proof of the Zonklar Equations]
% or
%\appendix  % for no appendix heading
% do not use \section anymore after \appendix, only \section*
% is possibly needed

% use appendices with more than one appendix
% then use \section to start each appendix
% you must declare a \section before using any
% \subsection or using \label (\appendices by itself
% starts a section numbered zero.)
%

% % use section* for acknowledgment
% \ifCLASSOPTIONcompsoc
%   % The Computer Society usually uses the plural form
%   \section*{Acknowledgments}
% \else
%   % regular IEEE prefers the singular form
%   \section*{Acknowledgment}
% \fi

% This work was supported in part by National Natural Science Foundation of China (No. 61806043), Sichuan Science and Technology Program (Nos. 2021YFS0172, 2020YFS0119, and 2020YFG0288), and Special Science Foundation of Quzhou (No. 2020D013).
% Philip S. Yu was supported in part by NSF under grants III-1763325, III-1909323,  III-2106758, and SaTC-1930941.
% Lifang He was supported by Lehigh's accelerator grant S00010293.

% Can use something like this to put references on a page
% by themselves when using endfloat and the captionsoff option.
\ifCLASSOPTIONcaptionsoff
  \newpage
\fi

% \newpage

% trigger a \newpage just before the given reference
% number - used to balance the columns on the last page
% adjust value as needed - may need to be readjusted if
% the document is modified later
%\IEEEtriggeratref{8}
% The "triggered" command can be changed if desired:
%\IEEEtriggercmd{\enlargethispage{-5in}}

% references section

% can use a bibliography generated by BibTeX as a .bbl file
% BibTeX documentation can be easily obtained at:
% http://mirror.ctan.org/biblio/bibtex/contrib/doc/
% The IEEEtran BibTeX style support page is at:
% http://www.michaelshell.org/tex/ieeetran/bibtex/
%\bibliographystyle{IEEEtran}
% argument is your BibTeX string definitions and bibliography database(s)
%\bibliography{IEEEabrv,../bib/paper}
%
% <OR> manually copy in the resultant .bbl file
% set second argument of \begin to the number of references
% (used to reserve space for the reference number labels box)

\bibliographystyle{unsrt}
\bibliography{main}

\newpage
\onecolumn

\section*{Appendix}
\noindent\textcolor{black}{
% \begin{theorem}
\textbf{Theorem 1.}
\emph{
Let $\mathcal{L}(h)$ be the expectation of $\hat{\mathcal{L}}(h)$. Suppose that for any $\bm x \in \mathcal{X}$ and $h\in \mathcal{H}$, there exists $M  < \infty$ such that $\Vert{\bm x}^v\Vert, \Vert h_{\bm x}({\bm x})\Vert, \Vert h_{\bm z}({\bm x}) \Vert, \Vert {h_{\bm \mu}} \Vert \in [0, M] $ hold. With probability $1-\delta$ for any $h \in \mathcal{H}$, the following inequality holds
\begin{equation}\small
    \hat{\mathcal{L}}(h) \leq \mathcal{L}(h) + \frac{c_1}{\sqrt{N}} + c_2\sqrt{\frac{\log \frac{1}{\delta}}{2N}},
\end{equation}
where $c_1$ and $c_2$ are constants that depend on $K$, $M$ and $\gamma$.}}
% \end{theorem}

\noindent\textcolor{black}{
\emph{Proof.}
To simplify the proof, we discuss the case that the target distribution is enhanced by an element-wise square without being divided by a factor. For the given sample set $S=\{{\bm x}_1,\cdots, {\bm x}_N \}$, let ${S}'$ be the sample set that differs from $S$ by only one sample point $\bar{\bm x}_r$. The empirical risk of the function $h$ on ${S}'$ is denoted as $\hat{\mathcal{L}}'(h)$. We have
\begin{equation*}\small
\begin{aligned}
    & \left\vert \mathop{\rm sup}_{h\in \mathcal{H}} \left\vert \hat{\mathcal{L}}(h) -\mathcal{L}(h) \right\vert - \mathop{\rm sup}_{h\in \mathcal{H}} \left\vert \hat{\mathcal{L}}'(h) -\mathcal{L}(h) \right\vert \right\vert \\
    \leq & \mathop{\rm sup}_{h\in \mathcal{H}} \left\vert \hat{\mathcal{L}}(h) - \hat{\mathcal{L}}'(h) \right\vert \\
    = & \mathop{\rm sup}_{h\in \mathcal{H}} \left\vert \frac{1}{N}\left( \Vert {\bm x}^v_r - h_{\bm x}({\bm x}^v_r) \Vert^2 - \Vert \bar{{\bm x}}^v_r - h_{\bm x}(\bar{\bm x}^v_r) \Vert^2 \right) \right\vert \\
    & + \gamma \mathop{\rm sup}_{h\in \mathcal{H}} \left\vert \frac{1}{N}\sum_{j=1}^K \left( p(h_{\bm z}(\bm{x}_r),{h_{\bm \mu}}_j) \log p(h_{\bm z}(\bm{x}_r),{h_{\bm \mu}}_j) - p(h_{\bm z}(\bar{\bm{x}}_r),{h_{\bm \mu}}_j) \log p(h_{\bm z}(\bar{\bm{x}}_r),{h_{\bm \mu}}_j) \right) \right\vert \\
    & + \gamma \mathop{\rm sup}_{h\in \mathcal{H}} \left\vert \frac{1}{N}\sum_{j=1}^K p(h_{\bm z}(\bm{x}_r),{h_{\bm \mu}}_j) \log \left(1+\Vert h_{\bm z}(\bm{x}^v_r)-{h_{\bm \mu}}^v_j\Vert^2\right) -  p(h_{\bm z}(\bar{\bm{x}}_r),{h_{\bm \mu}}_j) \log \left(1+\Vert h_{\bm z}(\bar{\bm{x}}^v_r)-{h_{\bm \mu}}^v_j\Vert^2\right)  \right\vert \\
    & + \gamma \mathop{\rm sup}_{h\in \mathcal{H}} \left\vert \frac{1}{N} \left[ \log \left( \sum_{j=1}^K {(1+\Vert h_{\bm z}(\bm{x}^v_r)-{h_{\bm \mu}}^v_j\Vert^2)}^{-1} \right) - \log \left( \sum_{j=1}^K {(1+\Vert h_{\bm z}(\bar{\bm{x}}^v_r)-{h_{\bm \mu}}^v_j\Vert^2)}^{-1} \right) \right] \right\vert \\
    \leq & \mathop{\rm sup}_{h\in \mathcal{H}} \left\vert \frac{1}{N}\left( \Vert {\bm x}^v_r - h_{\bm x}({\bm x}^v_r) \Vert^2 - \Vert \bar{{\bm x}}^v_r - h_{\bm x}(\bar{\bm x}^v_r) \Vert^2 \right) \right\vert \\
    & + \gamma \mathop{\rm sup}_{h\in \mathcal{H}} \left\vert \frac{1}{N}\sum_{j=1}^K  p(h_{\bm z}(\bm{x}_r),{h_{\bm \mu}}_j) \left[ \log p(h_{\bm z}(\bm{x}_r),{h_{\bm \mu}}_j) - \log p(h_{\bm z}(\bar{\bm{x}}_r),{h_{\bm \mu}}_j) \right]  \right\vert \\
    & + \gamma \mathop{\rm sup}_{h\in \mathcal{H}} \left\vert \frac{1}{N}\sum_{j=1}^K \left( \left[ p(h_{\bm z}(\bm{x}_r),{h_{\bm \mu}}_j) - p(h_{\bm z}(\bar{\bm{x}}_r),{h_{\bm \mu}}_j) \right] \log p(h_{\bm z}(\bar{\bm{x}}_r),{h_{\bm \mu}}_j) \right) \right\vert \\
    & + \gamma \mathop{\rm sup}_{h\in \mathcal{H}} \left\vert \frac{1}{N}\sum_{j=1}^K p(h_{\bm z}(\bm{x}_r),{h_{\bm \mu}}_j) \left[ \log \left(1+\Vert h_{\bm z}(\bm{x}^v_r)-{h_{\bm \mu}}^v_j\Vert^2\right) - \log \left(1+\Vert h_{\bm z}(\bar{\bm{x}}^v_r)-{h_{\bm \mu}}^v_j\Vert^2\right) \right] \right\vert \\
    & + \gamma \mathop{\rm sup}_{h\in \mathcal{H}} \left\vert \frac{1}{N}\sum_{j=1}^K \left[ p(h_{\bm z}(\bm{x}_r),{h_{\bm \mu}}_j) - p(h_{\bm z}(\bar{\bm{x}}_r),{h_{\bm \mu}}_j) \right] \log \left(1+\Vert h_{\bm z}(\bar{\bm{x}}^v_r)-{h_{\bm \mu}}^v_j\Vert^2\right) \right\vert \\
    & + \gamma \mathop{\rm sup}_{h\in \mathcal{H}} \left\vert \frac{1}{N} \left[ \log \left( \sum_{j=1}^K {(1+\Vert h_{\bm z}(\bm{x}^v_r)-{h_{\bm \mu}}^v_j\Vert^2)}^{-1} \right) - \log \left( \sum_{j=1}^K {(1+\Vert h_{\bm z}(\bar{\bm{x}}^v_r)-{h_{\bm \mu}}^v_j\Vert^2)}^{-1} \right) \right] \right\vert \\
    \leq &  \mathop{\rm sup}_{h\in \mathcal{H}} \left\vert \frac{1}{N}\left( \Vert {\bm x}^v_r - h_{\bm x}({\bm x}^v_r) \Vert^2 - \Vert \bar{{\bm x}}^v_r - h_{\bm x}(\bar{\bm x}^v_r) \Vert^2 \right) \right\vert \\
    & + \gamma \mathop{\rm sup}_{h\in \mathcal{H}}  \frac{1}{N}\sum_{j=1}^K p(h_{\bm z}(\bm{x}_r),{h_{\bm \mu}}_j) {\xi}_c \left\vert p(h_{\bm z}(\bm{x}_r),{h_{\bm \mu}}_j) - p(h_{\bm z}(\bar{\bm{x}}_r),{h_{\bm \mu}}_j) \right\vert  + \gamma \mathop{\rm sup}_{h\in \mathcal{H}}  \frac{1}{N}\sum_{j=1}^K \left\vert \log p(h_{\bm z}(\bar{\bm{x}}_r),{h_{\bm \mu}}_j) \right\vert \\
    & + \gamma \mathop{\rm sup}_{h\in \mathcal{H}} \frac{1}{N}\sum_{j=1}^K p(h_{\bm z}(\bm{x}_r),{h_{\bm \mu}}_j) {\xi}_a \left\vert \left(1+\Vert h_{\bm z}(\bm{x}^v_r)-{h_{\bm \mu}}^v_j\Vert^2\right) - \left(1+\Vert h_{\bm z}(\bar{\bm{x}}^v_r)-{h_{\bm \mu}}^v_j\Vert^2\right)\right\vert \\
    & + \gamma \mathop{\rm sup}_{h\in \mathcal{H}}  \frac{1}{N}\sum_{j=1}^K \left\vert \log \left(1+\Vert h_{\bm z}(\bar{\bm{x}}^v_i)-{h_{\bm \mu}}^v_j\Vert^2\right) \right\vert \\
    & + \gamma \mathop{\rm sup}_{h\in \mathcal{H}} \frac{1}{N} \sum_{j=1}^K {\xi}_b \left\vert {(1+\Vert h_{\bm z}(\bm{x}^v_r)-{h_{\bm \mu}}^v_j\Vert^2)}^{-1} - {(1+\Vert h_{\bm z}(\bar{\bm{x}}^v_r)-{h_{\bm \mu}}^v_j\Vert^2)}^{-1} \right\vert,
\end{aligned}
\end{equation*}
where the last inequality is based on the Lagrange mean value theorem. Let $C:={\rm max}\{{\xi}_a,{\xi}_b,{\xi}_c\}$, we have
\begin{equation*}\small
\begin{aligned}
    & \left\vert \mathop{\rm sup}_{h\in \mathcal{H}} \left\vert \hat{\mathcal{L}}(h) -\mathcal{L}(h) \right\vert - \mathop{\rm sup}_{h\in \mathcal{H}} \left\vert \hat{\mathcal{L}}'(h) -\mathcal{L}(h) \right\vert \right\vert \\
    % \leq & \mathop{\rm sup}_{h\in \mathcal{H}} \left\vert \frac{1}{N}\left( \Vert {\bm x}^v_r \Vert^2 - \Vert \bar{{\bm x}}^v_r \Vert^2 + \Vert h_{\bm x}({\bm x}^v_r) \Vert^2 - \Vert h_{\bm x}(\bar{\bm x}^v_r) \Vert^2 - 2\langle {\bm x}^v_r, h_{\bm x}({\bm x}^v_r) \rangle + 2\langle \bar{\bm x}^v_r, h_{\bm x}(\bar{\bm x}^v_r) \rangle \right) \right\vert \\
    % \leq & \mathop{\rm sup}_{h\in \mathcal{H}} \left\vert \frac{1}{N} \Vert {\bm x}^v_r - h_{\bm x}({\bm x}^v_r) \Vert^2 \right\vert + \mathop{\rm sup}_{h\in \mathcal{H}} \frac{1}{N}\sum_{j=1}^K p(h_{\bm z}(\bm{x}_r),{h_{\bm \mu}}_j) {\xi}_a \left\vert 2\langle h_{\bm z}(\bm{x}^v_r), {h_{\bm \mu}}^v_j\rangle - 2\langle h_{\bm z}(\bar{\bm{x}}^v_r), {h_{\bm \mu}}^v_j \rangle + \Vert h_{\bm z}(\bm{x}^v_r) \Vert^2 + \Vert h_{\bm z}(\bar{\bm{x}}^v_r) \Vert^2 \right\vert \\
    \leq & \mathop{\rm sup}_{h\in \mathcal{H}} \frac{1}{N}\left\vert \left( \Vert {\bm x}^v_r - h_{\bm x}({\bm x}^v_r) \Vert^2 - \Vert \bar{{\bm x}}^v_r - h_{\bm x}(\bar{\bm x}^v_r) \Vert^2 \right) \right\vert \\
    & + \gamma \mathop{\rm sup}_{h\in \mathcal{H}} \frac{1}{N}\sum_{j=1}^K p(h_{\bm z}(\bm{x}_r),{h_{\bm \mu}}_j) {\xi}_a \left\vert \Vert h_{\bm z}(\bm{x}^v_r)-{h_{\bm \mu}}^v_j\Vert^2 - \Vert h_{\bm z}(\bar{\bm{x}}^v_r)-{h_{\bm \mu}}^v_j\Vert^2 \right\vert \\
    % & + \mathop{\rm sup}_{h\in \mathcal{H}} \frac{1}{N} \sum_{j=1}^K {\xi}_b \frac{2\langle h_{\bm z}(\bm{x}^v_r), {h_{\bm \mu}}^v_j \rangle - 2\langle h_{\bm z}(\bar{\bm{x}}^v_r), {h_{\bm \mu}}^v_j \rangle  + \Vert h_{\bm z}(\bar{\bm{x}}^v_r) \Vert^2 - \Vert h_{\bm z}(\bm{x}^v_r) \Vert^2}{(1+\Vert h_{\bm z}(\bm{x}^v_r)-{h_{\bm \mu}}^v_j\Vert^2)(1+\Vert h_{\bm z}(\bar{\bm{x}}^v_r)-{h_{\bm \mu}}^v_j\Vert^2)} \\
    & + \gamma \mathop{\rm sup}_{h\in \mathcal{H}} \frac{1}{N} \sum_{j=1}^K {\xi}_b \frac{ \left\vert \Vert h_{\bm z}(\bar{\bm{x}}^v_r)-{h_{\bm \mu}}^v_j\Vert^2 - \Vert h_{\bm z}(\bm{x}^v_r)-{h_{\bm \mu}}^v_j\Vert^2 \right\vert }{(1+\Vert h_{\bm z}(\bm{x}^v_r)-{h_{\bm \mu}}^v_j\Vert^2)(1+\Vert h_{\bm z}(\bar{\bm{x}}^v_r)-{h_{\bm \mu}}^v_j\Vert^2)} \\
    & + \frac{\gamma K(\log (1+4M^2) + {\xi}_c + \tilde{c})}{N} \\
    \leq & \mathop{\rm sup}_{h\in \mathcal{H}} \frac{1}{N}\left( 2M^2 + 2\Vert {\bm x}^v_r\Vert \Vert h_{\bm x}({\bm x}^v_r) \Vert + 2\Vert \bar{\bm x}^v_r\Vert \Vert h_{\bm x}(\bar{\bm x}^v_r) \Vert  \right) \\
    & + \gamma \mathop{\rm sup}_{h\in \mathcal{H}} \frac{1}{N}\sum_{j=1}^K p(h_{\bm z}(\bm{x}_r),{h_{\bm \mu}}_j) {\xi}_a \left( 2\Vert h_{\bm z}(\bm{x}^v_r)\Vert \Vert{h_{\bm \mu}}^v_j\Vert + 2\Vert h_{\bm z}(\bar{\bm{x}}^v_r) \Vert \Vert {h_{\bm \mu}}^v_j \Vert + M^2 \right) \\
    & + \gamma \mathop{\rm sup}_{h\in \mathcal{H}} \frac{1}{N} \sum_{j=1}^K {\xi}_b \left( 2\Vert h_{\bm z}(\bm{x}^v_r)\Vert \Vert{h_{\bm \mu}}^v_j\Vert + 2\Vert h_{\bm z}(\bar{\bm{x}}^v_r) \Vert \Vert {h_{\bm \mu}}^v_j \Vert + M^2 \right) \\
    & + \frac{\gamma K(\log (1+4M^2) + {\xi}_c + \tilde{c})}{N} \\
    \leq & \frac{5\gamma KM^2({\xi}_a+{\xi}_b)+6M^2}{N} + \frac{\gamma K(\log (1+4M^2) + {\xi}_c + \tilde{c})}{N} \\
    \leq & \frac{10\gamma M^2CK+\gamma K(\log (1+4M^2)+C+\tilde{c})+6M^2}{N}:=\frac{c_{K,M}}{N},
\end{aligned}
\end{equation*}
where $\tilde{c}=\log \frac{(4M^2+K)^2+(K-1)(4M^2+1)^2[1+(K-1)(4M^2+1)]^2}{(4M^2+K)^2}$, and $c_{K,M,\gamma}= 10\gamma M^2CK+\gamma K(\log (1+4M^2)+C+\tilde{c})+6M^2$ denotes the constant that dependents on $K$, $M$ and $\gamma$. Next, we bound the expectation term $\mathbb{E}_S \mathop{\rm sup}_{h \in \mathcal{H}} \left\vert \mathcal{L}(h) - \hat{\mathcal{L}}(h) \right\vert$. Let ${\sigma}_1,\ldots,{\sigma}_N$ be i.i.d. Rademacher random variables. First we have
\begin{equation}\small\label{expect_term}
\begin{aligned}
    & \mathbb{E}_S \mathop{\rm sup}_{h \in \mathcal{H}} \left\vert \mathcal{L}(h) - \hat{\mathcal{L}}(h) \right\vert \\
    \leq & \mathbb{E}_{S,\bar{S}} \mathop{\rm sup}_{h \in \mathcal{H}} \left\vert \frac{1}{N}\sum_{i=1}^N \left( \Vert {\bm x}^v_i - h_{\bm x}({\bm x}^v_i) \Vert^2 - \Vert \bar{{\bm x}}^v_i - h_{\bm x}(\bar{\bm x}^v_i) \Vert^2 \right) \right\vert \\
    % & + \mathbb{E}_{S,\bar{S}} \mathop{\rm sup}_{h\in \mathcal{H}} \left\vert \frac{1}{N}\sum_{i=1}^N \sum_{j=1}^K \left( p(h_{\bm z}(\bm{x}_i),{h_{\bm \mu}}_j) \log p(h_{\bm z}(\bm{x}_i),{h_{\bm \mu}}_j) - p(h_{\bm z}(\bar{\bm{x}}_i),{h_{\bm \mu}}_j) \log p(h_{\bm z}(\bar{\bm{x}}_i),{h_{\bm \mu}}_j) \right) \right\vert \\
    & + \gamma \mathbb{E}_{S,\bar{S}} \mathop{\rm sup}_{h\in \mathcal{H}} \left\vert \frac{1}{N}\sum_{i=1}^N\sum_{j=1}^K  \left( p(h_{\bm z}(\bm{x}_i),{h_{\bm \mu}}_j)  \log p(h_{\bm z}(\bm{x}_i),{h_{\bm \mu}}_j) - p(h_{\bm z}(\bar{\bm{x}}_i),{h_{\bm \mu}}_j) \log p(h_{\bm z}(\bar{\bm{x}}_i),{h_{\bm \mu}}_j) \right)  \right\vert \\
    & + \gamma \mathbb{E}_{S,\bar{S}} \mathop{\rm sup}_{h\in \mathcal{H}} \left\vert \frac{1}{N}\sum_{i=1}^N\sum_{j=1}^K \left( p(h_{\bm z}(\bm{x}_i),{h_{\bm \mu}}_j) \log \left(1+\Vert h_{\bm z}(\bm{x}^v_i)-{h_{\bm \mu}}^v_j\Vert^2\right) - p(h_{\bm z}(\bar{\bm{x}}_i),{h_{\bm \mu}}_j) \log \left(1+\Vert h_{\bm z}(\bar{\bm{x}}^v_i)-{h_{\bm \mu}}^v_j\Vert^2\right) \right) \right\vert \\
    % & + \mathbb{E}_{S,\bar{S}} \mathop{\rm sup}_{h\in \mathcal{H}} \left\vert \frac{1}{N}\sum_{i=1}^N \sum_{j=1}^K p(h_{\bm z}(\bm{x}_r),{h_{\bm \mu}}_j) \left[ \log \left(1+\Vert h_{\bm z}(\bm{x}^v_r)-{h_{\bm \mu}}^v_j\Vert^2\right) - \log \left(1+\Vert h_{\bm z}(\bar{\bm{x}}^v_r)-{h_{\bm \mu}}^v_j\Vert^2\right) \right] \right\vert \\
    % & + \mathbb{E}_{S,\bar{S}} \mathop{\rm sup}_{h \in \mathcal{H}} \left\vert \frac{1}{N}\sum_{i=1}^N \sum_{j=1}^K p(h_{\bm z}(\bm{x}_i),{h_{\bm \mu}}_j) \left( \log \left(1+\Vert h_{\bm z}(\bm{x}^v_i)-{h_{\bm \mu}}^v_j\Vert^2\right) - \log \left(1+\Vert h_{\bm z}(\bar{\bm{x}}^v_i)-{h_{\bm \mu}}^v_j\Vert^2\right) \right) \right\vert \\
    & + \gamma \mathbb{E}_{S,\bar{S}} \mathop{\rm sup}_{h \in \mathcal{H}} \left\vert \frac{1}{N}\sum_{i=1}^N \left[ \log \left( \sum_{j=1}^K {(1+\Vert h_{\bm z}(\bm{x}^v_i)-{h_{\bm \mu}}^v_j\Vert^2)}^{-1} \right) - \log \left( \sum_{j=1}^K {(1+\Vert h_{\bm z}(\bar{\bm{x}}^v_i)-{h_{\bm \mu}}^v_j\Vert^2)}^{-1} \right)\right] \right\vert \\
    = & 2\mathbb{E}_{S,\bm \sigma} \mathop{\rm sup}_{h \in \mathcal{H}} \left\vert \frac{1}{N}\sum_{i=1}^N {\sigma}_i \Vert {\bm x}^v_r - h_{\bm x}({\bm x}^v_r) \Vert^2 \right\vert + 2\gamma \mathbb{E}_{S,\bm \sigma} \mathop{\rm sup}_{h\in \mathcal{H}} \left\vert \frac{1}{N}\sum_{i=1}^N\sum_{j=1}^K  {\sigma}_i p(h_{\bm z}(\bm{x}_i),{h_{\bm \mu}}_j)  \log p(h_{\bm z}(\bm{x}_i),{h_{\bm \mu}}_j) \right\vert \\
    & + 2\gamma \mathbb{E}_{S,\bm \sigma} \mathop{\rm sup}_{h \in \mathcal{H}} \left\vert \frac{1}{N}\sum_{i=1}^N \sum_{j=1}^K {\sigma}_i p(h_{\bm z}(\bm{x}_i),{h_{\bm \mu}}_j)  \log \left(1+\Vert h_{\bm z}(\bm{x}^v_i)-{h_{\bm \mu}}^v_j\Vert^2\right) \right\vert \\
    % = & 2\mathbb{E}_{S,\bm \sigma} \mathop{\rm sup}_{h \in \mathcal{H}} \left\vert \frac{1}{N}\sum_{i=1}^N {\sigma}_i \Vert {\bm x}^v_r - h_{\bm x}({\bm x}^v_r) \Vert^2 \right\vert + 2\mathbb{E}_{S,\bm \sigma} \mathop{\rm sup}_{h\in \mathcal{H}} \left\vert \frac{1}{N}\sum_{i=1}^N\sum_{j=1}^K   p(h_{\bm z}(\bm{x}_i),{h_{\bm \mu}}_j)  \log p(h_{\bm z}(\bm{x}_i),{h_{\bm \mu}}_j) \right\vert \\
    % & + 2\mathbb{E}_{S,\bm \sigma} \mathop{\rm sup}_{h \in \mathcal{H}} \left\vert \frac{1}{N}\sum_{i=1}^N \sum_{j=1}^K {\sigma}_i p(h_{\bm z}(\bm{x}_i),{h_{\bm \mu}}_j)  \log \left(1+\Vert h_{\bm z}(\bm{x}^v_i)-{h_{\bm \mu}}^v_j\Vert^2\right) \right\vert \\
    & + 2\gamma \mathbb{E}_{S,\bm \sigma} \mathop{\rm sup}_{h \in \mathcal{H}} \left\vert \frac{1}{N}\sum_{i=1}^N {\sigma}_i \log \left( \sum_{j=1}^K {(1+\Vert h_{\bm z}(\bm{x}^v_i)-{h_{\bm \mu}}^v_j\Vert^2)}^{-1} \right) \right\vert \\
    \leq & 2\mathbb{E}_{S,\bm \sigma} \mathop{\rm sup}_{h \in \mathcal{H}} \left\vert \frac{1}{N}\sum_{i=1}^N {\sigma}_i \Vert {\bm x}^v_r - h_{\bm x}({\bm x}^v_r) \Vert^2 \right\vert + 2\gamma K \mathop{\rm max}_{j} \mathbb{E}_{S,\bm \sigma} \mathop{\rm sup}_{h\in \mathcal{H}} \left\vert \frac{1}{N}\sum_{i=1}^N  {\sigma}_i p(h_{\bm z}(\bm{x}_i),{h_{\bm \mu}}_j)  \log p(h_{\bm z}(\bm{x}_i),{h_{\bm \mu}}_j) \right\vert \\
    & + 2\gamma K\mathop{\rm max}_{j} \mathbb{E}_{S,\bm \sigma} \mathop{\rm sup}_{h \in \mathcal{H}} \left\vert \frac{1}{N}\sum_{i=1}^N {\sigma}_i p(h_{\bm z}(\bm{x}_i),{h_{\bm \mu}}_j)  \log \left(1+\Vert h_{\bm z}(\bm{x}^v_i)-{h_{\bm \mu}}^v_j\Vert^2\right) \right\vert \\
    & + 2\gamma \mathbb{E}_{S,\bm \sigma} \mathop{\rm sup}_{h \in \mathcal{H}} \left\vert \frac{1}{N}\sum_{i=1}^N {\sigma}_i \log \left( \sum_{j=1}^K {(1+\Vert h_{\bm z}(\bm{x}^v_i)-{h_{\bm \mu}}^v_j\Vert^2)}^{-1} \right) \right\vert.
\end{aligned}
\end{equation}
For the first term, according to the Khintchine-Kahane inequality \cite{latala1994}, we have
\begin{equation}\small\label{expect_term_1}
\begin{aligned}
    & 2\mathbb{E}_{S,\bm \sigma} \mathop{\rm sup}_{h \in \mathcal{H}} \left\vert \frac{1}{N}\sum_{i=1}^N {\sigma}_i \Vert {\bm x}^v_r - h_{\bm x}({\bm x}^v_r) \Vert^2 \right\vert \leq 2\mathbb{E}_{S,\bm \sigma} \mathop{\rm sup}_{h \in \mathcal{H}} \frac{1}{N}\left( \sum_{i=1}^N [\Vert {\bm x}^v_r - h_{\bm x}({\bm x}^v_r) \Vert^2]^2 \right)^{\frac{1}{2}} \leq \frac{8M^2}{\sqrt{N}}.
\end{aligned}
\end{equation}
Similarly, we have
\begin{equation}\small\label{expect_term_2}
\begin{aligned}
    & 2\gamma K \mathop{\rm max}_{j} \mathbb{E}_{S,\bm \sigma} \mathop{\rm sup}_{h\in \mathcal{H}} \left\vert \frac{1}{N}\sum_{i=1}^N  p(h_{\bm z}(\bm{x}_i),{h_{\bm \mu}}_j)  \log p(h_{\bm z}(\bm{x}_i),{h_{\bm \mu}}_j) \right\vert \leq \frac{2\gamma K\tilde{c}}{\sqrt{N}} \\
    & 2\gamma K\mathop{\rm max}_{j} \mathbb{E}_{S,\bm \sigma} \mathop{\rm sup}_{h \in \mathcal{H}} \left\vert \frac{1}{N}\sum_{i=1}^N p(h_{\bm z}(\bm{x}_i),{h_{\bm \mu}}_j) {\sigma}_i \log \left(1+\Vert h_{\bm z}(\bm{x}^v_i)-{h_{\bm \mu}}^v_j\Vert^2\right) \right\vert \leq \frac{8\gamma KM^2}{\sqrt{N}}.
\end{aligned}
\end{equation}
For the last term, note that the following inequality holds
\begin{equation*}\small
    \log \frac{K}{4M^2+1} \leq {\log} \left( \sum_{j=1}^K {(1+\Vert h_{\bm z}(\bm{x}^v_i)-{h_{\bm \mu}}^v_j\Vert^2)}^{-1} \right) \leq {\log} K.
\end{equation*}
% since
% \begin{equation*}\small
%     K\mathop{\rm min}_{j}{(1+\Vert h_{\bm z}(\bm{x}^v_i)-{h_{\bm \mu}}^v_j\Vert^2)}^{-1} \leq \sum_{j=1}^K {(1+\Vert h_{\bm z}(\bm{x}^v_i)-{h_{\bm \mu}}^v_j\Vert^2)}^{-1} \leq K\mathop{\rm max}_{j}{(1+\Vert h_{\bm z}(\bm{x}^v_i)-{h_{\bm \mu}}^v_j\Vert^2)}^{-1}.
% \end{equation*}
And thus
\begin{equation}\small\label{expect_term_3}
\begin{aligned}
    & 2\gamma \mathbb{E}_{S,\bm \sigma} \mathop{\rm sup}_{h \in \mathcal{H}} \left\vert \frac{1}{N}\sum_{i=1}^N {\sigma}_i \log \left( \sum_{j=1}^K {(1+\Vert h_{\bm z}(\bm{x}^v_i)-{h_{\bm \mu}}^v_j\Vert^2)}^{-1} \right) \right\vert \leq \frac{2\gamma[{\log} K + \log (4M^2+1)]}{\sqrt{N}}.
    % \leq & 2\mathbb{E}_{S,\bm \sigma} \mathop{\rm sup}_{h \in \mathcal{H}} \frac{1}{N}\left( \sum_{i=1}^N {\log}^2 \left( \sum_{j=1}^K {(1+\Vert h_{\bm z}(\bm{x}^v_i)-{h_{\bm \mu}}^v_j\Vert^2)}^{-1} \right) \right)^{\frac{1}{2}} \\
\end{aligned}
\end{equation}
Combining (\ref{expect_term}), (\ref{expect_term_1}), (\ref{expect_term_2}) and (\ref{expect_term_3}), we have
\begin{equation*}\small
    \mathbb{E}_S \mathop{\rm sup}_{h \in \mathcal{H}} \left\vert \mathcal{L}(f) - \hat{\mathcal{L}}(f) \right\vert \leq \frac{8M^2(\gamma K+1) + 2\gamma K\tilde{c} + 2\gamma[{\log} K +\log(4M^2+1)]}{\sqrt{N}}.
\end{equation*}
According to the McDiarmid inequality \cite{mohri2018foundations}, we conclude that with probability $1-\delta$ for any $h \in \mathcal{H}$ the following inequality holds
\begin{equation*}\small
    \hat{\mathcal{L}}(h) \leq \mathcal{L}(h) + \frac{8M^2(\gamma K+1) + 2\gamma K\tilde{c} + 2\gamma[{\log} K +\log(4M^2+1)]}{\sqrt{N}} + c_{K,M,\gamma}\sqrt{\frac{\log \frac{1}{\delta}}{2N}}.
\end{equation*}
Let $c_1:=8M^2(\gamma K+1) + 2\gamma K\tilde{c} + 2\gamma[{\log} K +\log(4M^2+1)]$ and $c_2:=c_{K,M,\gamma}$, the proof is finished.}

\end{document}